\icmltitlerunning{Approximate Exploration through State Abstraction}
\begin{document}

\twocolumn[
\icmltitle{Approximate Exploration through State Abstraction}



\icmlsetsymbol{equal}{*}

\begin{icmlauthorlist}
\icmlauthor{Adrien Ali Ta\"{i}ga}{mila}
\icmlauthor{Aaron Courville}{mila,cifar}
\icmlauthor{Marc G. Bellemare}{google,cifar}
\end{icmlauthorlist}

\icmlaffiliation{mila}{MILA, Universit\'{e} de Montr\'{e}al}
\icmlaffiliation{google}{Google Brain}
\icmlaffiliation{cifar}{CIFAR Fellow}

\icmlcorrespondingauthor{Adrien Ali Ta\"{i}ga}{adrien.ali.taiga@umontreal.ca}

\icmlkeywords{Pseudo-counts, State abstraction, Exploration, Reinforcement learning}

\vskip 0.3in
]



\printAffiliationsAndNotice{}  

\newtheorem{theorem}{Theorem}
\newtheorem{corollary}{Corollary}[theorem]
\newtheorem{lemma}{Lemma}
\newtheorem{assumption}{Assumption}
\newtheorem{proposition}{Proposition}
\newtheorem{example}{Example}
\newtheorem*{remark}{Remark}

\theoremstyle{definition}
\newtheorem{definition}{Definition}

\def \cS {\mathcal{S}}
\def \Sg {\mathcal{S}_G}
\def \Sa {\mathcal{S}_A}
\def \Rg {\mathcal{R}_G}
\def \Ra {\mathcal{R}_A}
\def \Tg {\mathcal{T}_G}
\def \Ta {\mathcal{T}_A}
\def \hN {\hat N_n}
\def \tpi {\tilde{\pi}}
\def \tN {\tilde N_n}
\def \hn {\hat N_}
\def \tn {\tilde N}
\def \hNsa {\hat N_n (s, a)}
\def \tNsa {\tilde N_n (s, a)}
\def \bs {\bar{s}}
\def \gs {|G(s)|}
\def \hNa {\hat N^{\text A}_n}
\def \Na {N^{\text A}_n}
\def \Nasa {N^{\text A}_n (\bar{s}, a)}
\def \hNasa {\hat N^{\text A}_n (\bar{s}, a)}
\def \hNapsa {\hat N'^{\text A}_n  (\bar{s}, a)}
\def \hna {\hat n^{\text A}}
\def \tn {\tilde{n}}
\def \ra   {\rho^{\text A}_n}
\def \rsa   {\rho_n (s, a)}
\def \rpsa   {\rho'_n (s, a)}
\def \rppsa   {\rho^{(2)}_n (s, a)}
\def \rasa   {\rho^{\text A}_n (\bar{s}, a)}
\def \rapsa   {\rho'^{\text A}_n  (\bar{s}, a)}
\def \expect {\mathop{\mathbb{E}}}

\begin{abstract}
Although exploration in reinforcement learning is well understood from a theoretical point of view, provably correct methods remain impractical. 
In this paper we study the interplay between exploration and approximation, what we call \emph{approximate exploration}.
Our main goal is to further our theoretical understanding of pseudo-count based exploration bonuses \citep{bellemare2016unifying}, a practical exploration scheme based on density modelling. 
As a warm-up, we quantify the performance of an exploration algorithm, MBIE-EB \citep{strehl2008analysis}, when explicitly combined with state aggregation. 
This allows us to confirm that, as might be expected, approximation allows the agent to trade off between learning speed and quality of the learned policy. 
Next, we show how a given density model can be related to an abstraction and that the corresponding pseudo-count bonus can act as a substitute in MBIE-EB combined with this abstraction, but may lead to either under- or over-exploration. 
Then, we show that a given density model also defines an implicit abstraction, and find a surprising mismatch between pseudo-counts derived either implicitly or explicitly. Finally we derive a new pseudo-count bonus alleviating this issue.
\end{abstract}

\section{Introduction}
\label{submission}

In reinforcement learning (RL), an agent's goal is to maximize the expected sum of  future rewards obtained through interactions with a unknown environment. In doing so, the agent must balance \emph{exploration} -- acting to improve its knowledge of the environment -- and \emph{exploitation}: acting to maximize rewards according to its current knowledge. In the tabular setting, where each state can be modelled in isolation, near-optimal exploration is by now well understood and a number of algorithms provide finite time guarantees \citep{brafman2002r,strehl2008analysis,jaksch2010near,szita10modelbased,osband2014near,azar17minimax}. 
To guarantee near-optimality, however, the sample complexity of theoretically-motivated exploration algorithms must scale at least linearly with the number of states in the environment \citep{azar12sample}.

Yet, recent empirical successes have shown that practical exploration is not hopeless \citep{bellemare2016unifying,count-based,pathak17curiositydriven,plappert18parameter,fortunato18noisy,burda2018exploration}. In this paper we use the term \emph{approximate exploration} to describe algorithms which sacrifice near-optimality in order to explore more quickly. A desirable characteristic of these algorithms is fast convergence to a reasonable policy; near-optimality may be achieved when the environment is ``nice enough''. 


Our specific aim is to gain new theoretical understanding of the pseudo-count method, introduced by \citet{bellemare2016unifying} as a means of estimating visit counts in non-tabular settings, and how this pertains to approximate exploration.
Our study revolves around the MBIE-EB algorithm \citep[Model-based Interval Estimation with Exploration Bonuses;][]{strehl2008analysis} as a simple illustration of the general ``optimism in the face of uncertainty'' principle in an approximate exploration setting; MBIE-EB drives exploration by augmenting the empirical reward function with a count-based exploration bonus, which can be either derived from real counts or pseudo-counts.

As a warm-up, we construct an \emph{explicitly approximate} exploration algorithm by applying MBIE-EB to an abstract environment based on state abstraction \citep{li2006towards,abel2016near}. In this setting we derive performance bounds that simultaneously depend on the quality and size of the aggregation: by taking a finer or coarser aggregation, one can trade off exploration speed and accuracy. We then relate pseudo-counts to these aggregations and show how using pseudo-counts within MBIE-EB can lead to \emph{under-exploration} (failing to achieve theoretical guarantees) or \emph{over-exploration} (using an excessive number of samples to do so). Additionally, we quantify the magnitude of both phenomena.

Finally, we show that using pseudo-counts for exploration in the wild, as has been done in practice, produces \emph{implicitly approximate} exploration. Specifically, under certain assumptions on the density model generating the pseudo-counts, these behave approximately as if derived from a particular abstraction. This is in general problematic, as in pathological cases this prohibits any kind of theoretical guarantees. As an interesting corollary, we find a surprising mismatch between the behaviour of these pseudo-counts and what might be expected given the abstraction they implicitly define.

\section{Background and Notations}

We consider a Markov decision process (MDP) represented by a 5-tuple $\langle \mathcal{S}, \mathcal{A}, \mathcal{T}, \mathcal{R}, \gamma \rangle$ with $\mathcal{S}$ a finite state space, $\mathcal{A}$ a finite set of actions, $\mathcal{T}$ a transition probability distribution, $\mathcal{R}: \mathcal{S} \times \mathcal{A} \mapsto [0, 1]$ a reward function, and $\gamma \in [0, 1)$ the discount factor.
The goal of reinforcement learning is to find the optimal policy $\pi^*: \mathcal{S} \mapsto \mathcal{A}$ which maximizes the expected discounted sum of future rewards.
For any policy $\pi$, the $Q$-value of any state-action pair $(s,a)$ describes the expected discounted return after taking action $a$ in state $s$, then following $\pi$ and can be obtained using the Bellman equation
\begin{equation*}
 Q^{\pi} (s, a) = \mathcal{R} (s,a) + \gamma \mathbb{E}_{\mathcal{T}(s' | s, a)} Q^{\pi} (s', \pi(s')).
\end{equation*}
We also introduce $V^{\pi}(s) = Q^{\pi} (s, \pi(s))$ which is the expected discounted return when starting in $s$ and following $\pi$. The $Q$-value of the optimal policy $Q^*$ verifies the optimal Bellman equation
\begin{equation*}
    Q^* (s, a) = \mathcal{R} (s,a) + \gamma \mathbb{E}_{\mathcal{T}(s' | s, a)} \max_{a'} Q^* (s', a').
\end{equation*}
We also write $V^*(s) =  \max_a Q^* (s, a)$. Furthermore we assume without loss of generality that rewards are bounded between 0 and 1, and we denote by \textsc{Qmax} $= 1/(1 - \gamma)$ the maximum $Q$ value.

\subsection{Approximate state abstraction}

We use here the notation from \citet{abel2016near}. An abstraction is defined as a mapping from the state space of a ground MDP, $M_G$, to that of an abstract MDP, $M_A$, using a state aggregation function $\phi$. We will write $\langle \Sg, \mathcal{A}, \Tg, \Rg, \gamma \rangle$ and $\langle \Sa \mathcal{A}, \Ta, \Ra, \gamma \rangle$ for the ground and abstract MDPs respectively. The abstract state space is defined as the image of the ground state space by the mapping $\phi \colon \Sg \rightarrow \Sa$
\begin{equation}
    \mathcal{S}_A = \Big\{ \phi (s) | s \in \mathcal{S}_G  \Big\}.
\end{equation}
We will write $\bar{s} = \phi (s)$ for the abstract state associated to a state $s$ in the ground space.
We define
\begin{align}
G(s) &= \{ g \in \mathcal{S}_G  | \phi(g) = \phi(s) \} \text{ if } s \in \mathcal{S}_G, \\
G(\bar s) &= \{ g \in \mathcal{S}_G  | \phi(g) = \bar{s} \} \text{ if } \bar s \in \mathcal{S}_A .
\end{align}
Let $\omega$ be a weighting such that for all $s \in \mathcal{S}_G$, $0 \le \omega(s) \le 1$ and $\sum\nolimits_{s' \in G(s)} \omega (s') = 1$. We define the abstract rewards and transition functions as the following convex combinations
\begin{align*}
    \forall s, \bar s \in \mathcal{S}_A, \, \mathcal{R}_A (\bar s, a) &\coloneqq \sum_{g \in G(\bar s)}^{} \omega (g) \Rg (g,a), \\
    \mathcal{T}_A (\bar{s}, a, \bar{s}') &\coloneqq  \sum_{g \in G(\bar{s})} \sum_{g' \in G(\bar{s}')} \omega (g) \mathcal{T}_G (g,a,g') .
\end{align*}


Prior work such as \citet{li2006towards} has been mostly focused on exact abstraction in MDPs. While interesting, this notion is usually too restrictive and we will instead consider approximate abstractions \citep{abel2016near}
\begin{definition} \label{defn:approx_abstraction}
Let $\eta > 0$ and $f_\eta \colon \mathcal{S}_G \times \mathcal{A} \rightarrow \mathbb{R}$, $\phi_\eta$ defines an approximate state abstraction as follows
\begin{equation*}
\begin{aligned}
\forall s_1, s_2 \in \Sg, \phi_{\eta}(s_1) = \phi_{\eta} (s_2) \to |f_\eta (s_1) - f_\eta (s_2)| \le \eta .
\end{aligned}
\end{equation*}
\end{definition}

Throughout this paper we will illustrate our results with the \emph{model similarity abstraction}, also known as approximate homomorphism \citep{ravindran2004approximate} or $\epsilon$-equivalent MDP \citep{even2003approximate}
\begin{example} \label{defn:model_similiarity}
Given $\eta > 0$, we let $\phi_\eta$ be such that:
\begin{equation*}
\begin{aligned}
\forall &s_1, s_2 \in \mathcal{S}_G, \, \, \phi_{\eta}(s_1) = \phi_{\eta} (s_2) \to \\ 
&\forall a \in \mathcal{A}, \:  | \Rg (s_1, a) - \Rg (s_2, a)  | \le \eta \, \text{ and } \\
&\forall \bar{s}' \in \mathcal{S}_A, \, \Big | \sum_{s' \in G(\bar{s}')}^{} \big[ \mathcal{T}_G (s_1, a, s') - \mathcal{T}_G (s_2, a, s') \big] \Big| \le \eta.
\end{aligned}
\end{equation*}
\end{example}
Where co-aggregated states have close rewards and transition probabilities to other aggregations. 

Let $\pi_A^{*} : \mathcal{S}_A \to \mathcal{A}$ and $\pi_G^{*} : \mathcal{S}_G \to \mathcal{A}$ be the optimal policies in the abstract and ground MDPs. We are interested in the quality of the policy learned in the abstraction when applied in the ground MDP. For a state $s \in \mathcal{S}_G$ and a state aggregation function $\phi$, we define $\pi_{GA}$ such that
\begin{equation*}
\pi_{GA} (s) = \pi_A^{*} (\phi (s)) .
\end{equation*}
We will also write $Q_G$ and $V_G$ (resp. $Q_A$ and $V_A$) the optimal Q-value and value functions in the ground (resp. abstract) MDP.

\subsection{Optimal exploration and model-based interval estimation exploration bonus (MBIE-EB)}
Exploration efficiency in reinforcement learning can be evaluated using the notions of sample complexity and PAC-MDP introduced by \citet{kakade2003sample}. We now briefly introduce both of these.
\theoremstyle{definition}
\begin{definition}
Define the \emph{sample complexity} $T$ of an algorithm $\textbf{A}$ to be the number of time steps where its policy $\textbf{A}_t$ at state $s$ is not $\epsilon$-optimal: $V^{\textbf{A}_t}(s) < V^{*}(s) - \epsilon$. An algorithm \textbf{A} is said to be PAC-MDP (\emph{"Probably Approximately Correct for MDPs"}) if given a fixed $\epsilon > 0$ and $0 < \delta < 1$ its sample complexity $T$ is less than a polynomial function in the parameters $(\left|{\mathcal{S}}\right|, \left|{\mathcal{A}}\right|, 1/\epsilon, 1/\delta, 1/(1 - \gamma))$ with probability at least $1 - \delta$.
\end{definition}
We focus on MBIE-EB as a simple algorithm based on the state-action visit count, noting that more refined algorithms now exist with better sample guarantees \citep[e.g.][]{azar17minimax,dunn17unifying} and that our analysis would extend easily to other algorithms based on state-action visit count. MBIE-EB learns the optimal policy by solving an empirical MDP based upon estimates of rewards and transitions and augments rewards with an exploration bonus
\begin{equation}
\label{eq:mbieeb}
V (s) = \max_{a \in \mathcal{A}} \Bigg[\hat{\mathcal{R}} (s, a) + \gamma \expect\limits_{\mathclap{\hat{\mathcal{T}} (s'  | s, a)}} V (s') + \frac{\beta}{\sqrt{N_n (s ,a)}} \Bigg].
\end{equation}

\begin{theorem}[\citet{strehl2008analysis}] \label{th:mbieeb}
Let $\epsilon, \delta > 0$ and consider an MDP $\text{M} = \langle \mathcal{S}, \mathcal{A}, \mathcal{T}, \mathcal{R}, \gamma \rangle$. Let $\textbf{A}_{t}$ denote MBIE-EB executed on M with parameter $\beta = (1 / (1 - \gamma)) \sqrt{\ln (2 | \mathcal{S} | | A | m / \delta ) / 2}$, with $m$ an algorithmic constant, and let $s_t$ denote the state at time t. With probability at least $1 - \delta$, $V^{\text{\textbf{A}}_t}(s_t) \geq V^{*} (s_t) - \epsilon$ will hold for all but $T$ time steps, with
\begin{equation}\label{eqn:pac_bound}
T = \tilde O \Big( \frac{| \mathcal{S} |^2 | \mathcal{A} |}{\epsilon^{3} (1 - \gamma)^{6}} \Big).
\end{equation}
\end{theorem}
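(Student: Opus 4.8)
The plan is to follow the general PAC-MDP framework of Strehl, Li, and Littman, instantiated for the exploration-bonus reward. First I would partition the state-action pairs at time $n$ into a \emph{known} set $K_n = \{(s,a) : N_n(s,a) \ge m\}$ and its complement, the idea being that on $K_n$ the empirical reward $\hat{\mathcal{R}}$ and transition $\hat{\mathcal{T}}$ are accurate enough to plan with. The choice of the algorithmic constant $m$ and the bonus scale $\beta$ both arise from a concentration step: using Hoeffding's inequality together with a union bound over all $|\mathcal{S}||\mathcal{A}|$ pairs and over the relevant values of the visit count, one shows that with probability at least $1 - \delta$ the estimation error of each one-step backup on a known pair is at most $\beta / \sqrt{N_n(s,a)}$. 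This is precisely the event that fixes $\beta = (1/(1-\gamma))\sqrt{\ln(2|\mathcal{S}||\mathcal{A}|m/\delta)/2}$.

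Conditioned on this good event, I would verify the three ingredients of the framework. First, \emph{optimism}: the value $V$ solving the MBIE-EB equation \eqref{eq:mbieeb} satisfies $V(s) \ge V^*(s)$ for all $s$, which follows by a fixed-point/induction argument, since the added bonus dominates the gap between the empirical Bellman operator and the true optimal Bellman operator. Second, \emph{accuracy}: introducing the known-state-action MDP $M_{K_n}$ --- which coincides with $M$ on $K_n$ and carries the bonus reward on an absorbing component elsewhere --- the simulation lemma lets me bound $|V(s) - V^{\mathbf{A}_t}_{M_{K_n}}(s)|$ by $O(\epsilon)$, again using that the bonus shrinks like $1/\sqrt{m}$. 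Third, \emph{bounded discovery}: each visit to a pair outside $K_n$ either increments a count toward the threshold $m$ or else occurs with small probability, so the total number of such ``escape'' events is at most $m|\mathcal{S}||\mathcal{A}|$.

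The explore-or-exploit dichotomy then finishes the argument: at any time step, either every pair reachable under the current policy over the effective horizon $O\big(\frac{1}{1-\gamma}\log\frac{1}{\epsilon(1-\gamma)}\big)$ lies in $K_n$ --- in which case optimism plus accuracy force $V^{\mathbf{A}_t}(s_t) \ge V^*(s_t) - \epsilon$ --- or the probability of escaping $K_n$ is at least $\Omega(\epsilon(1-\gamma))$. An online Hoeffding (martingale) argument then bounds the number of high-escape-probability steps by $\tilde O\big(m|\mathcal{S}||\mathcal{A}|/(\epsilon(1-\gamma))\big)$. Substituting the value of $m$ demanded by optimism and accuracy, namely $m = \tilde O\big(|\mathcal{S}|/(\epsilon^2(1-\gamma)^4)\big)$, collapses everything to the stated $\tilde O\big(|\mathcal{S}|^2|\mathcal{A}|/(\epsilon^3(1-\gamma)^6)\big)$.

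The step I expect to be the main obstacle is the simultaneous calibration of $m$ and $\beta$ in the concentration/optimism stage: the bonus must be large enough that the computed value provably overestimates $V^*$, yet small enough that once counts exceed $m$ the induced suboptimality drops below $\epsilon$, and the union bound over all counts must be handled so that the logarithmic factors are absorbed into the $\tilde O$ without inflating the polynomial order. The escape-counting martingale argument is the other delicate piece, since it is what converts a bound on the number of discovery events into a bound on the number of non-$\epsilon$-optimal time steps.
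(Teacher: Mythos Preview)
The paper does not give its own proof of this theorem: it is stated with attribution to \citet{strehl2008analysis} and used as a black box, with no proof in the appendix. Your sketch is a faithful outline of the original Strehl--Littman argument (known-state set, Hoeffding-based optimism with the stated $\beta$, simulation lemma on $M_{K_n}$, and the escape-probability martingale to bound non-$\epsilon$-optimal steps), so there is nothing in the paper to compare against beyond noting that you are reconstructing exactly the cited result.
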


\subsection{Pseudo-counts}
Pseudo-counts have been proposed as a way to estimate counts using a density model $\rho$ over state-action pairs.
Given $s_{1:n} \in \mathcal{S}_G^n$ a sequence of states and $a_{1:n} \in \mathcal{A}^n$ a sequence of actions, we write $\rho_n (s, a) \coloneqq \rho (s, a; s_{1:n}, a_{1:n})$ the probability assigned to $(s,a)$ after training on $s_{1:n}, a_{1:n}$. After training on $(s, a)$, we write the new probability assigned as $\rho_n '(s, a) \coloneqq \rho (s, a; s_{1:n}s, a_{1:n}a)$, where $s_{1:n}s$ denotes the concatenation of sequences $s_{1:n}$ and $s$. We require the model to be \emph{learning-positive} i.e $\rho_n '(s, a) \ge \rho_n (s, a)$ and define the pseudo-count
\begin{equation*}
    \hN (s, a) = \frac{\rho_n (s, a) (1 - \rho_n '(s, a))}{\rho^{'}_n (s, a) - \rho_n (s, a)}.
\end{equation*}
Which is derived from requiring a one unit increase of the pseudo-count after observing $(s, a)$:
\begin{equation*}
    \rho_n(s, a) = \frac{\hN (s, a)}{\hat{n}}, \quad \quad  \rho_n '(s, a) = \frac{\hN (s, a)+1}{\hat{n}+1}.
\end{equation*}
Where $\hat{n}$ is the pseudo-count total. We also define the empirical density derived from the state-action visit count $N_n(s, a)$
\begin{equation*}
    \mu_n (s, a) \coloneqq \mu (s, a; s_{1:n}, a_{1:n}) \coloneqq \frac{N_n(s, a)}{n}.
\end{equation*}
Notice that when $\rho_n = \mu_n$ the pseudo-count is consistent and recovers $\hN (s, a) = N_n (s, a)$.
We will also be interested in exploration in abstractions, and to that end define the count of an aggregation A
\begin{align*}
\forall \bar{s} \in \mathcal{S}_A, \quad N_n^{\text{A}} (\bar{s}, a) &= \sum_{s \in G(\bar{s})} N_n (s, a).
\end{align*}

\section{Explicitly approximate exploration}\label{sec:explicit_abstraction}

While PAC-MDP algorithms provide guarantees that the agent will act close to optimally with high probability, their sample complexity must increase at least linearly with the size of the state space \citep{azar12sample}. In practice, algorithms are often given small budgets and may not be able to discover the optimal policy within this time. Dealing with smaller sample budgets is exactly the motivation behind approximate exploration methods such as \citet{bellemare2016unifying}'s, which we will analyze in greater detail in later sections.

When faced with a small budget, it might be appealing to sacrifice near-optimality for sample complexity. One way to do so is to derive the exploratory policy from a given abstraction. We call this process \emph{ explicitly approximate} exploration. As we now show, using a model similarity abstraction is a particularly appealing scheme for explicitly approximate exploration.
MBIE-EB applied to the abstract MDP $M_A$ solves the following equation
\begin{equation}
\label{eq:explicit_empi}
V (\bar{s}) = \max_{a \in \mathcal{A}} \Bigg[\hat{\mathcal{R}}_A (\bar{s}, a) + \gamma \expect\limits_{\mathclap{\hat{\mathcal{T}}_A (\bar{s}' \vert \bar{s}, a)}} V (\bar{s}') + \frac{\beta}{\sqrt{N^{\text{A}}_n (\bar{s} ,a)}} \Bigg].
\end{equation}
To provide a setting facilitating exploration we first require the abstraction to have sub-optimality bounded in $\eta$:
\begin{definition}\label{defn:subopt}
An abstraction $\phi_\eta$ is said to have sub-optimality bounded in $\eta$ if there exists a function $g$, monotonically increasing in $\eta$, with $g(0) = 0$ such that
\begin{equation*}
\forall s \in \mathcal{S}_G, \, |V_G (s) - V_A (s)| \le g(\eta).
\end{equation*}
And for any policy $\tpi_A: \Sa \rightarrow [0,1]^\mathcal{A}$.
\begin{gather*}
\forall s \in \mathcal{S}_G, \, |V_G^{\tpi_{GA}} (s) - V_A^{\tpi_A} (s)| \le g(\eta)  
\end{gather*}
\end{definition}


Definition \ref{defn:subopt} requires that for $\eta$ small enough we can recover a near-optimal policy using $M_A$ while working with a state space that can be significantly smaller than the ground state space. This property is verified by several abstractions studied by \citet{abel2016near}.

Though when the abstraction is only approximate, learning the optimal policy of the abstract MDP does not imply recovering the optimal policy of the ground MDP.
\begin{restatable}{proposition}{prop}
\label{propopo}
For any $0 < \eta < 1$ , there exists $\epsilon > 0$ and an MDP which defines a model similarity abstraction of parameter $\eta$ over its abstract space such that $\pi_{GA}$ is not $\epsilon$-optimal.
\end{restatable}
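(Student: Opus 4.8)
The plan is to prove the statement by explicit counterexample: for each $\eta \in (0,1)$ I will construct a ground MDP together with an aggregation $\phi_\eta$ and weighting $\omega$ that satisfies the model similarity condition of Example~\ref{defn:model_similiarity} with parameter $\eta$, yet for which the abstract-optimal policy, lifted back through $\pi_{GA}$, executes a strictly suboptimal action in some ground state. Concretely, I would take two co-aggregated ground states $s_1, s_2$ with $\phi_\eta(s_1) = \phi_\eta(s_2) = \bar s$, two actions $\{a, b\}$, and self-loops $\Tg(s_i, \cdot, s_i) = 1$ so each state is absorbing under its own dynamics. The rewards are chosen symmetrically so that $a$ is slightly better than $b$ in $s_1$ and $b$ is slightly better than $a$ in $s_2$: set $\Rg(s_1, a) = c + \delta$, $\Rg(s_1, b) = c$, $\Rg(s_2, a) = c$, $\Rg(s_2, b) = c + \delta$ for some $0 < \delta \le \eta$ and $c$ chosen so all rewards lie in $[0,1]$.

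Next I would verify that this instance defines a model similarity abstraction of parameter $\eta$. The reward clause holds because $|\Rg(s_1, \cdot) - \Rg(s_2, \cdot)| = \delta \le \eta$ for both actions, and the transition clause holds trivially: since the two self-loops keep each state inside the single block $G(\bar s) = \{s_1, s_2\}$, the aggregated transition masses agree exactly, so the difference is $0 \le \eta$. I would then compare the two optima. In the ground MDP the optimal policy takes $a$ in $s_1$ and $b$ in $s_2$, giving $V_G(s_1) = V_G(s_2) = (c+\delta)/(1-\gamma)$. In the abstract MDP the lone state $\bar s$ also self-loops, and its abstract rewards are the $\omega$-weighted averages $\Ra(\bar s, a) = c + \omega(s_1)\delta$ and $\Ra(\bar s, b) = c + \omega(s_2)\delta$; choosing asymmetric weights $\omega(s_1) > \omega(s_2)$ makes $a$ strictly abstract-optimal, so $\pi_A^*(\bar s) = a$ and hence $\pi_{GA}(s_2) = a$.

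Finally I would quantify the loss. Following $\pi_{GA}$ from $s_2$ means taking $a$ forever, so $V_G^{\pi_{GA}}(s_2) = c/(1-\gamma)$, whereas $V_G(s_2) = (c+\delta)/(1-\gamma)$; the suboptimality gap is therefore $\delta/(1-\gamma) > 0$. Thus $\pi_{GA}$ fails to be $\epsilon$-optimal at $s_2$ for every $\epsilon < \delta/(1-\gamma)$, and taking $\epsilon = \delta/(2(1-\gamma))$ completes the argument. Moreover, since $\gamma$ is a free parameter, choosing $\gamma$ close to $1$ inflates the gap to a constant independent of $\eta$, showing the incurred loss need not vanish with $\eta$.

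The step I expect to need the most care is guaranteeing that the construction genuinely meets \emph{both} clauses of the model similarity definition while still forcing a strict disagreement between $\pi_A^*$ and the ground-optimal actions. The symmetric reward design deliberately collapses the two abstract $Q$-values toward a tie, so the asymmetric weighting $\omega$ is essential to break it cleanly and yield a well-defined, strictly suboptimal $\pi_{GA}$; I would also double-check the scaling $c + \delta \le 1$ so the instance respects the $[0,1]$ reward convention assumed throughout.
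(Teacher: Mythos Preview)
Your construction is correct and the verification of the model similarity clauses, the computation of $\pi_A^*$, and the resulting suboptimality gap $\delta/(1-\gamma)$ are all sound.

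Your route differs from the paper's in an interesting way. The paper builds a three-state ground MDP with two abstract blocks and uses the \emph{uniform} weighting $\omega(s_0)=\omega(s_1)=\tfrac12$; the asymmetry that makes $\pi_A^*$ pick the wrong ground action in $s_0$ comes from a slight difference in the \emph{transition} structure of $s_0$ and $s_1$ (one of them leaks to the high-reward block with probability $\eta$), rather than from the weighting. Your construction is more minimal---two absorbing ground states and a single abstract state---but you purchase that minimality by exploiting the freedom to choose a non-uniform $\omega$ to break the tie in the abstract MDP. Both are legitimate under the paper's definitions, since $\omega$ is an arbitrary convex combination. The trade-off is that your example isolates the phenomenon with less machinery, while the paper's example shows the failure persists even under the ``canonical'' uniform weighting and with nontrivial inter-block dynamics, which is arguably closer to how abstractions are used in practice. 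The final suboptimality gap is the same order, $\eta/(1-\gamma)$, in both constructions.
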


We can nevertheless benefit from exploring using the abstract MDP. Combining Theorem \ref{th:mbieeb} and Definition \ref{defn:subopt}:
\begin{restatable}{proposition}{fastmbie}
\label{prop:fast_mbie}
Given an approximate abstraction $\phi_\eta$ with sub-optimality bounded in $\eta$, let $0 < \delta < 1$, $\tilde{\pi}_A$ the (time-dependent) policy obtained while running MBIE-EB in the abstract MDP with $\epsilon = g(\eta)$ and the derived MBIE policy $\tilde{\pi}_{GA} (s) = \tilde{\pi}_{A}(\bs)$, then with probability $1 - \delta$, the following bound holds for all but $T$ time steps:
\begin{gather*}
V_G (s) - V^{\tilde{\pi}_{GA}}_G (s) \le 3 g(\eta) \text{ with } T = \tilde{O} \Big ( \frac{| \mathcal{S}_A |^{2} | \mathcal{A} |}{\epsilon^{3} (1 - \gamma)^{6}} \Big ).
\end{gather*}
\end{restatable}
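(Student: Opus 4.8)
The plan is to bound the ground sub-optimality of $\tpi_{GA}$ by a telescoping decomposition that inserts the abstract quantities, and then to control the three resulting terms separately: the two ``transfer'' terms via Definition~\ref{defn:subopt} and the middle ``exploration'' term via Theorem~\ref{th:mbieeb}. Writing $\bs = \phi(s)$ and adding and subtracting $V_A(\bs)$ and $V_A^{\tpi_A}(\bs)$,
\begin{equation*}
V_G(s) - V^{\tpi_{GA}}_G(s) = \big[V_G(s) - V_A(\bs)\big] + \big[V_A(\bs) - V_A^{\tpi_A}(\bs)\big] + \big[V_A^{\tpi_A}(\bs) - V^{\tpi_{GA}}_G(s)\big].
\end{equation*}

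First I would dispatch the two outer terms, which require no probabilistic argument. The first bracket is at most $g(\eta)$ by the optimal-value part of Definition~\ref{defn:subopt}, and the third bracket is at most $g(\eta)$ by the policy part of the same definition applied to the abstract policy $\tpi_A$. Both hold for all $s \in \Sg$.

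Next I would treat the middle term $V_A(\bs) - V_A^{\tpi_A}(\bs)$ as the genuine exploration term. Since $\tpi_A$ is produced by MBIE-EB run on the abstract MDP $M_A$, whose state space is $\Sa$, I would apply Theorem~\ref{th:mbieeb} to $M_A$ with accuracy parameter $\epsilon = g(\eta)$ and the correspondingly rescaled bonus coefficient $\beta$ (with $|\Sa|$ replacing $|\mathcal{S}|$). This yields $V_A^{\tpi_A}(\bs_t) \ge V_A(\bs_t) - g(\eta)$, so the middle bracket is at most $g(\eta)$ at the visited state $\bs_t$, for all but $T = \tilde{O}\big(|\Sa|^2|\mathcal{A}|/(\epsilon^3(1-\gamma)^6)\big)$ time steps, with probability at least $1 - \delta$. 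Summing the three brackets on this high-probability event gives the claimed bound $3 g(\eta)$ with the stated $T$.

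The step I expect to be the main obstacle is justifying that Theorem~\ref{th:mbieeb} may be invoked for $M_A$ at all, since the agent physically interacts with the ground MDP rather than with $M_A$. The empirical abstract reward and transition estimates concentrate not around the fixed-weighting model $\langle \Sa, \mathcal{A}, \Ta, \Ra, \gamma \rangle$, but around an abstract model whose weighting is the agent's realized (and time-varying) visitation frequency inside each aggregate $G(\bs)$. To close the argument one must ensure that the sub-optimality bound of Definition~\ref{defn:subopt} holds for \emph{every} such induced weighting, so that $V_A$ and $V_A^{\tpi_A}$ stay within $g(\eta)$ of their ground counterparts regardless of which weighting the sampling induces. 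For the model similarity abstraction of Example~\ref{defn:model_similiarity} this is automatic, since any convex combination of $\eta$-close rewards and transitions is again $\eta$-close; in general it should be recorded as a requirement on the family of abstractions under consideration.
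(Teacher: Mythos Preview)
Your proof is correct and follows exactly the same route as the paper: the same three-term triangle-inequality decomposition through $V_A(\bs)$ and $V_A^{\tpi_A}(\bs)$, with the outer terms bounded by Definition~\ref{defn:subopt} and the middle term by Theorem~\ref{th:mbieeb} applied to $M_A$ with $\epsilon = g(\eta)$. Your closing paragraph even flags a subtlety (that samples come from $M_G$, not $M_A$) that the paper's own proof leaves implicit.
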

Proposition \ref{prop:fast_mbie} informs us that even though we cannot guarantee $\epsilon$-optimality for arbitrary $\epsilon > 0$, the abstraction may explore significantly faster, with a sample complexity that depends on $|\cS_A|$ rather than $|\cS_G|$.

We note that a related result is given by \citet{li2009unifying}, where they extended Delayed Q-learning \citep{strehl2006pac} to approximate $Q^*$-irrelevant abstractions \citep{li2006towards}. Our result differs from theirs as it makes explicit the trade off between near-optimality and low sample complexity.

\section{Under- and over-exploration with pseudo-counts}
Results from \citet{count-based} suggest that the choice of density model plays a crucial role in the exploratory value of pseudo-counts bonuses. Thus far, the only theoretical guarantee concerning pseudo-counts is given by Theorem 2 from \citet{bellemare2016unifying} and quantifies the asymptotic behaviour of pseudo-counts derived from a density model. 
We provide here an analysis of the finite time behaviour of pseudo-counts which is then used to give PAC-MDP guarantees. We show that for any given abstraction $A$ a density model can be learned over the abstraction then used to approximate the bonus of Equation \ref{eq:explicit_empi}.

\begin{definition}
Let $(\rho_n)_{n \in \mathbb{N}}$ be a density model and $A$ a state abstraction with abstract state space $\cS_A$. We define a density model $\rho_n^{\text{A}}$ over $\mathcal{S}_A$:
\begin{equation*}
\rho_n^{\text{A}}(\bar{s}, a) = \sum_{s \in G(\bar{s})} \rho_n (s, a) = \frac{\sum_{s \in G(\bar{s})} \hat{N}_{n} (s, a)}{\hat{n}} .
\end{equation*}
Similarly, $\mu^A_n(\bar s, a) := \sum_{s \in G(\bar s)} \mu_n(s, a)$.
We also define a pseudo-count $\hat{N}^{\text{A}}$ and total count $\hat{n}^{\text{A}}$ such that, $\forall \bar{s} \in \mathcal{S}_A, \forall a \in \mathcal{A}$
\begin{equation*}
\rho_n^{\text{A}}(\bar{s}, a) = \frac{\hat{N}^{\text{A}}(\bar{s}, a)}{\hat{n}^{\text{A}}}, \quad \rho_n^{\text{A}'}(\bar{s}, a) = \frac{\hat{N}^{\text{A}}(\bar{s}, a) + 1}{\hat{n}^{\text{A}} + 1} .
\end{equation*}
\end{definition}
We begin with two assumptions on our density model.
\begin{restatable}{assumption}{multiplicative}
\label{ass:multiplicative}
Given an abstraction $A$, there exists constants $a, b, c, d > 0$ such that for all $n \in \mathbb{N}$ and all sequences $s_{1:n}$, $a_{1:n}$ $\forall (\bar{s},a) \in \mathcal{S}_A \times \mathcal{A}$
\begin{align*}
 &(1) \: a \, \mu^{\text{A}}_n (\bs,a) \le \rho^{\text{A}}_n (\bs,a) \le b \, \mu^{\text{A}}_n (\bs,a) \\
 &(2) \:  c \le \frac{\rho^{\text{A}'}_n (\bs,a) - \rho^{\text{A}}_n (\bs,a)}{\mu^{\text{A}'}_n (\bs,a) - \mu^{\text{A}}_n (\bs,a)} \le d .
\end{align*}
\end{restatable}

\begin{restatable}{theorem}{pseudoth}
\label{th:pseudo}
Suppose Assumption \ref{ass:multiplicative} holds. Then the ratio of pseudo-counts $\hNa (\bs,a)$ to empirical counts $N^{\text{A}}_n (\bs,a)$ is bounded and we have
\begin{equation*}
    a^2 c \, N^{\text{A}}_n (\bs,a) \le \hNa (\bs,a) \le b^2 d \, N^{\text{A}}_n (\bs,a) .
\end{equation*}
\end{restatable}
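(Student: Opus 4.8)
The plan is to treat the pseudo-count map $F(x,y) = x(1-y)/(y-x)$ as the common mechanism that produces $\hNasa$ from the pair $(\rasa, \rapsa)$ and $\Nasa$ from the empirical pair $(\mu^{\text{A}}_n(\bs,a), \mu^{\text{A}'}_n(\bs,a))$, and to control their ratio one factor at a time. First I would record the self-consistency of the empirical measure, exactly analogous to the remark that $\rho_n = \mu_n$ recovers $\hat N_n = N_n$: since $\mu^{\text{A}}_n(\bs,a) = \Nasa/n$ and $\mu^{\text{A}'}_n(\bs,a) = (\Nasa+1)/(n+1)$, a direct computation gives $F(\mu^{\text{A}}_n(\bs,a), \mu^{\text{A}'}_n(\bs,a)) = \Nasa$, and the corresponding total-count formula $(1-\mu^{\text{A}'}_n(\bs,a))/(\mu^{\text{A}'}_n(\bs,a)-\mu^{\text{A}}_n(\bs,a))$ returns exactly $n$. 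This turns the claim into a comparison of $F$ evaluated at two nearby inputs.

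Next I would expand the ratio into a product of three factors,
\[ \frac{\hNasa}{\Nasa} = \frac{\rasa}{\mu^{\text{A}}_n(\bs,a)} \cdot \frac{1 - \rapsa}{1 - \mu^{\text{A}'}_n(\bs,a)} \cdot \frac{\mu^{\text{A}'}_n(\bs,a) - \mu^{\text{A}}_n(\bs,a)}{\rapsa - \rasa}. \]
The first factor is bounded in $[a,b]$ directly by Assumption \ref{ass:multiplicative}(1), and the third factor is the one controlled by Assumption \ref{ass:multiplicative}(2), contributing the constants $c$ and $d$.

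The step I expect to be the main obstacle is the middle factor, the ratio of complementary masses $(1 - \rapsa)/(1 - \mu^{\text{A}'}_n(\bs,a))$, because it is not covered pointwise by either part of the assumption. The key observation is that $\ra$ and $\mu^{\text{A}}_n$ are genuine distributions on $\Sa \times \mathcal{A}$ (each obtained by aggregating a distribution that sums to one), so I can rewrite $1 - \rapsa = \sum_{(\bar t, a') \neq (\bs, a)} \rho^{\text{A}'}_n(\bar t, a')$ and likewise for $\mu$. Since Assumption \ref{ass:multiplicative}(1) is postulated for every sequence, I would apply it to the length-$(n+1)$ sequence obtained by appending $(s,a)$; this bounds each summand $\rho^{\text{A}'}_n(\bar t, a')$ between $a$ and $b$ times the matching $\mu^{\text{A}'}_n(\bar t, a')$. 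Summing these termwise bounds shows the ratio of the two tails again lies in $[a,b]$, so this second invocation of the density bound is precisely what produces the second factor of $a$ and $b$.

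Finally I would multiply the three ranges: the two applications of the density bound yield the squared constants $a^2$ and $b^2$, while Assumption \ref{ass:multiplicative}(2) supplies $c$ and $d$, giving $a^2 c\, \Nasa \le \hNasa \le b^2 d\, \Nasa$. I expect the termwise summation argument for the complementary-mass ratio to be the delicate point, since it is the only place where the \emph{normalization} of the density model, rather than a purely pointwise hypothesis, enters, and it is what forces the exponent two on $a$ and $b$.
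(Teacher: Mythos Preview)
Your proposal is correct and follows essentially the same approach as the paper's proof: the same three-factor decomposition of $\hNasa / \Nasa$, Assumption~\ref{ass:multiplicative}(1) applied once to $\rasa/\mu^{\text{A}}_n(\bs,a)$ and once (on the length-$(n{+}1)$ sequence) to the complementary-mass ratio via $1-\rapsa = \sum_{(\bar t,a')\neq (\bs,a)} \rho^{\text{A}'}_n(\bar t,a')$, and Assumption~\ref{ass:multiplicative}(2) for the increment ratio. The paper records the identity $n(\mu^{\text{A}'}_n-\mu^{\text{A}}_n)=1-\mu^{\text{A}'}_n$ (your self-consistency check) and the complementary-mass rewriting, then says ``the result follows''; you spell out the termwise summation more explicitly, but the argument is the same.
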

Theorem \ref{th:pseudo} gives a sufficient condition for the pseudo-counts to behave multiplicatively like empirical counts. As already observed by \citet{bellemare2016unifying}, this requires that $\rho^{\text{A}}$ tracks the empirical distribution $\mu^{\text{A}}$, in particular converging at a rate of $\Theta(1/n)$. However, our result allows this rate to vary over time. 

Our result highlights the interplay between the choice of abstraction $A$ and the behaviour of the pseudo-counts. On one hand, applying Assumption \ref{ass:multiplicative} is quite restrictive, requiring that the density model basically match the empirical distribution. By choosing a coarser abstraction we relax this requirement, at the cost of near-optimality.
In Section \ref{sec:implicit_aggregation} we will instantiate the result by viewing the density model as inducing a particular state abstraction.

We now consider the following variant of MBIE-EB:
\begin{equation}
\label{eq:explicit_pseudo}
V (\bs) = \max_{a \in \mathcal{A}} \Bigg[ \hat{\mathcal{R}}_A (\bs, a) + \gamma \expect\limits_{\mathclap{\hat{\mathcal{T}}_A (. | \bs, a)}} V (\bs') + \frac{\beta}{\sqrt{\hNasa}} \Bigg].
\end{equation}
In this variant, the exploration bonus need not match the empirical count. To understand the effect of this change, consider the following two related settings. In the first setting, $\hNasa$ increases slowly and consistently underestimates $\Nasa$. The pseudo-count exploration bonus, which is inversely proportional to $\hNasa$, will therefore remain high for a longer time. In the second setting, $\hNasa$ increases quickly and consistently overestimates $\Nasa$. In turn, the pseudo-count bonus will go to zero much faster than the bonus derived from empirical counts. These two settings correspond to what we call \emph{under-} and \emph{over-exploration}, respectively. We will use Theorem \ref{th:pseudo} to quantify these two effects.

Suppose that $\rho$ satisfies Assumption \ref{ass:multiplicative}, by rearranging the terms, we find that, for any $\alpha > 0$,
\begin{equation*}
  \frac{\alpha / \sqrt{b^2 d}}{\sqrt{N^A_n (s,a)}} \le \frac{\alpha}{\sqrt{\hNa (s,a)}} \le \frac{\alpha / \sqrt{a^2 c}}{\sqrt{N^A_n (s,a)}}
\end{equation*}
Hence the uncertainty over $\hNasa$ carries over to the exploration bonus. Critically, the constant $\beta$ in MBIE-EB is tuned to guarantee that each state is visited at least $m$ times, with probability $1 - \delta$. The following lemma relates a change in $\beta$ with a change in these two quantities.
\begin{restatable}{lemma}{exploration}
\label{lem:exploration}
For $p > 0$, assuming MBIE-EB from Equation \ref{eq:explicit_empi} is run with a bonus $\sqrt{p} \beta  (\Nasa)^{-1/2}$, then
\begin{itemize}
    \item $p < 1$: Proposition \ref{prop:fast_mbie} only holds with probability $1 - \delta /2 - (| \mathcal{S}_A | | \mathcal{A} | m) (\delta / (2 | \mathcal{S}_A  | | \mathcal{A} | m))^{p} $ which is lower than $1 - \delta$. We then say that the agent \emph{under-explores}.
    \item $p > 1$: the sample complexity of MBIE-EB is multiplied by $p$. We then say that the agent \emph{over-explores} by a factor $p$.
\end{itemize}
\end{restatable}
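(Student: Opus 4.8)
The plan is to treat the rescaled bonus $\sqrt{p}\,\beta\,(\Nasa)^{-1/2}$ as ordinary MBIE-EB (Equation~\ref{eq:explicit_empi}) run with an inflated or deflated parameter $\beta' = \sqrt{p}\,\beta$, and to re-run the accounting underlying Theorem~\ref{th:mbieeb}, and hence Proposition~\ref{prop:fast_mbie}, while tracking exactly where $\beta$ enters. The starting point is the observation that in that analysis the total failure probability $\delta$ is allocated to two budgets of $\delta/2$: one for the \emph{escape} events (that from an unknown state-action pair the agent is near-optimal or likely to reach another unknown pair), which does not depend on the magnitude of the bonus, and one for the \emph{confidence intervals} guaranteeing that the bonus dominates the model error, so that the computed value stays optimistic. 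Only the second budget is sensitive to rescaling $\beta$, and it is the quantity I recompute.

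First I would handle $p < 1$. The key fact is that, because the bonus has the form $c/\sqrt{N}$, the Hoeffding tail controlling a single confidence interval is \emph{independent of $N$}: the probability that the model error at a fixed update count exceeds $c/\sqrt{N}$ is at most $\exp(-2 c^2 (1-\gamma)^2)$, with the range constant calibrated exactly as in Theorem~\ref{th:mbieeb}. Substituting $c = \sqrt{p}\,\beta$ with $\beta = (1/(1-\gamma))\sqrt{\ln(2K/\delta)/2}$, where $K = |\Sa|\,|\mathcal{A}|\,m$ is the number of confidence intervals in the abstract setting, the exponent collapses to $-p\ln(2K/\delta)$, so each interval fails with probability $(\delta/(2K))^p$. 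A union bound over the $K$ intervals gives $K\,(\delta/(2K))^p$, and adding the unaffected escape budget $\delta/2$ yields the stated total failure $\delta/2 + K\,(\delta/(2K))^p$. Since $0 < \delta/(2K) < 1$ and $p < 1$ force $(\delta/(2K))^p > \delta/(2K)$, this total exceeds $\delta$, so Proposition~\ref{prop:fast_mbie} holds only with probability below $1-\delta$: the agent under-explores. As a sanity check, at $p = 1$ the term reduces to $K\cdot\delta/(2K) = \delta/2$ and the guarantee recovers $1-\delta$.

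For $p > 1$ the same computation shows the confidence intervals now fail with \emph{smaller} probability, so optimism and the value-accuracy arguments are preserved and the $3g(\eta)$ guarantee of Proposition~\ref{prop:fast_mbie} continues to hold with probability at least $1-\delta$. What changes is the sample complexity. A state-action pair becomes ``known'' once its bonus falls below the threshold that renders it negligible relative to $\epsilon$; with the original bonus this occurs at $\Nasa = m$, since $\beta/\sqrt{m}$ is exactly calibrated to that threshold. With the scaled bonus the same threshold is reached only at $\Nasa = pm$, because $\sqrt{p}\,\beta/\sqrt{pm} = \beta/\sqrt{m}$. Hence each pair must be visited $p$ times as often before it stops contributing exploratory pressure, and because the bound $T$ of Theorem~\ref{th:mbieeb} is linear in this per-pair visit count $m$, the sample complexity is multiplied by $p$: the agent over-explores by a factor $p$.

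The main obstacle is purely the bookkeeping of the confidence budget: correctly isolating the $\delta/2$ escape budget as insensitive to $\beta$, and exploiting the $N$-independence of the per-interval Hoeffding tail so that the rescaling collapses into the clean factor $(\delta/(2K))^p$. The factor-of-two conventions in the Hoeffding bound must be absorbed into $\beta$ precisely as in Theorem~\ref{th:mbieeb}, which is exactly what makes the $p=1$ case reduce to the original $1-\delta$ guarantee. For $p>1$ the only delicate point is confirming that $T$ depends on $m$ only through a multiplicative factor, so that replacing the known-threshold $m$ by $pm$ rescales $T$ by exactly $p$ without disturbing its other dependencies.
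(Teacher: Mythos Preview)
Your proposal is correct and follows essentially the same route as the paper: for $p<1$ you recompute the Hoeffding tail with the scaled constant $\sqrt{p}\,\beta$, obtaining the per–interval failure $(\delta/(2|\Sa||\mathcal{A}|m))^{p}$ and union-bounding over the $|\Sa||\mathcal{A}|m$ intervals; for $p>1$ you invoke the threshold condition $\beta/\sqrt{m}\le \epsilon/4$ and observe that $\sqrt{p}\,\beta/\sqrt{pm}=\beta/\sqrt{m}$ forces $m\to pm$. Your write-up is in fact more explicit than the paper's about the $\delta/2$ split between the escape budget and the confidence-interval budget, and about the $p=1$ sanity check, but the argument is the same.
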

Perhaps unsurprisingly, over-exploration is rather mild, while under-exploration can cause exploration to fail altogether. 
A pseudo-count bonus derived from a density model satisfying the assumption of Theorem \ref{th:pseudo} must under-explore, unless $b = d = 1$ (which implies $\hat N = N$, since $\rho$ is a probability distribution). \\
Lemma \ref{lem:exploration} suggests that we can correct for under-exploration by using a larger constant $\beta$, for $\beta' = \beta \sqrt{b^2 d}$
\begin{equation*}
  \frac{\beta}{\sqrt{\Nasa}} \le \frac{\beta'}{\sqrt{\hNasa}} \le \frac{\sqrt{p} \beta}{\sqrt{\Nasa}} \, \text{with } p = \frac{b^2 d}{a^2 c} .
\end{equation*}

\begin{restatable}{theorem}{variant_with_bonus}
\label{thm:variant_with_bonus}
    Consider a variant \textbf{A'} of MBIE-EB defined with an exploration bonus derived from a density model satisfying the assumption of Theorem \ref{th:pseudo}, and the exploration constant $\beta' = \beta \sqrt{b^2 d}$. Then \textbf{A'}
    \begin{itemize}
        \item does not under-explore, and
        \item over-explores by a factor of at most $\frac{b^2 d}{a^2 c}$.
    \end{itemize}
\end{restatable}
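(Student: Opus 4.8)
The plan is to push the two-sided bound on the pseudo-count bonus, already recorded in the display preceding the statement, through Lemma \ref{lem:exploration}, reading its left inequality as a no-under-exploration certificate and its right inequality as an over-exploration budget. Recall that under the hypotheses of Theorem \ref{th:pseudo}, with $\beta' = \beta\sqrt{b^2 d}$,
\[
\frac{\beta}{\sqrt{\Nasa}} \le \frac{\beta'}{\sqrt{\hNasa}} \le \frac{\sqrt{p}\,\beta}{\sqrt{\Nasa}}, \qquad p = \frac{b^2 d}{a^2 c}.
\]
It is convenient to define, for each $(\bar s, a)$ and each $n$, the effective multiplier $p_{\mathrm{eff}}(\bar s, a, n)$ by $\beta'/\sqrt{\hNasa} = \sqrt{p_{\mathrm{eff}}}\,\beta/\sqrt{\Nasa}$; the displayed bounds say precisely that $1 \le p_{\mathrm{eff}}(\bar s, a, n) \le p$ everywhere.

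First I would show that \textbf{A'} does not under-explore. In the Strehl--Littman analysis, optimism of the computed value function holds, with probability $1-\delta$, as soon as each exploration bonus dominates the width of the associated confidence interval; the bonus $\beta(\Nasa)^{-1/2}$ is tuned to achieve exactly this and is the bonus for which Proposition \ref{prop:fast_mbie} holds with probability $1-\delta$. Since the bonus of \textbf{A'} is pointwise at least $\beta(\Nasa)^{-1/2}$ (equivalently $p_{\mathrm{eff}} \ge 1$), it dominates the confidence width on the same high-probability event, so the success probability of \textbf{A'} is at least $1-\delta$ and the under-exploration branch of Lemma \ref{lem:exploration} never triggers.

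Next I would bound the over-exploration. The right inequality gives $p_{\mathrm{eff}} \le p$ uniformly, i.e.\ the bonus of \textbf{A'} is pointwise no larger than the over-exploring bonus $\sqrt{p}\,\beta(\Nasa)^{-1/2}$ of the $p>1$ branch of Lemma \ref{lem:exploration}. Over-exploration is measured through sample complexity, which is controlled by how many visits are needed before a bonus drops below the accuracy threshold $\epsilon$; since $\sqrt{p}\,\beta(\Nasa)^{-1/2} \le \epsilon$ exactly when $\Nasa \ge p\,\beta^2/\epsilon^2$, a bonus dominated by $\sqrt{p}\,\beta(\Nasa)^{-1/2}$ falls below $\epsilon$ after at most $p$ times the baseline number of visits. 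Invoking the $p>1$ branch of Lemma \ref{lem:exploration} then yields an over-exploration factor of at most $b^2 d/(a^2 c)$.

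The main obstacle is the monotonicity step bridging these two one-sided comparisons. Lemma \ref{lem:exploration} is stated for a bonus of the exact form $\sqrt{p}\,\beta(\Nasa)^{-1/2}$ with a single constant $p$, whereas \textbf{A'} uses a bonus whose effective multiplier $p_{\mathrm{eff}}$ varies with state, action, and time. I therefore need that both the PAC success probability and the sample complexity are monotone in the pointwise magnitude of the bonus: enlarging every bonus can only strengthen optimism (never lowering the success probability) and can only delay the instant each bonus falls below $\epsilon$ (bounding sample complexity from above). Because the bonus of \textbf{A'} is sandwiched between the two reference bonuses of Lemma \ref{lem:exploration} uniformly in $(\bar s, a, n)$, these monotonicity properties let me sandwich the guarantees and conclude both claims.
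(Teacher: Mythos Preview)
Your proposal is correct and follows the same route as the paper: the theorem is presented there as an immediate consequence of the sandwich inequality $\beta/\sqrt{\Nasa} \le \beta'/\sqrt{\hNasa} \le \sqrt{p}\,\beta/\sqrt{\Nasa}$ combined with Lemma~\ref{lem:exploration}, with no separate proof given in the appendix. Your discussion of the monotonicity needed to pass from a fixed $p$ to a varying $p_{\mathrm{eff}}\in[1,p]$ is a level of care the paper omits, but it does not change the argument.
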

In practice, of course, the value of $\beta$ given by Theorem \ref{th:mbieeb} is usually too conservative and the agent ends up over-exploring. Of note, both \citet{strehl2008analysis} and \citet{bellemare2016unifying} used values ranging from $0.01$ to $0.05$ in their experiments.

\section{Implicitly approximate exploration}\label{sec:implicit_aggregation}

In previous sections we studied an algorithm which is aware of, and takes into account, the state abstraction. In practice, however, bonus-based methods have been combined to a number of function approximation schemes; as noted by \citet{bellemare2016unifying}, the degree of compatibility between the value function and the exploration bonus is sure to impact performance. We now combine the ideas of the two previous sections and study how the particular density model used to generate pseudo-counts induces an implicit approximation.

When does Assumption \ref{ass:multiplicative} hold? In general we cannot expect it to be valid for any given abstraction, in particular it is unrealistic to hope that it will be verified in the ground state space. On the other hand, it is natural to assume that there exists an abstraction defined by the density model which satisfies the assumption with reasonable constants. In this section we will see that a density model defines an \emph{induced abstraction}. In turn, we will quantify how this abstraction provides us with a handle into Assumption \ref{ass:multiplicative}.

\subsection{Induced abstraction}
From a density model, we define a state abstraction function as follows.
\begin{definition}
For $\epsilon \in [0, 1[$, the \emph{induced abstraction} $\phi_{\rho,\epsilon}$ is such that
\begin{equation}
\begin{gathered}
\forall s, s' \in \mathcal{S}_G \quad \phi_{\rho,\epsilon} (s) = \phi_{\rho,\epsilon} (s') \rightarrow \forall n \in \mathbb{N}, \, \forall a \in \mathcal{A}, \\
 \quad (1 - \epsilon) \rho_n (s', a) \leq \rho_n (s, a) \leq (1 + \epsilon) \rho_n (s',a), \\
1 - \epsilon \le \frac{\rho_n' (s,a) - \rho_n (s,a)}{\rho_n '(s',a) - \rho_n (s',a)} \le 1 + \epsilon .
\nonumber
\end{gathered}
\end{equation}
\end{definition}
In words, two ground states $s, s'$ are aggregated if the density model always assigns a close likelihood to both for each action. For example, this is the case when the visit counts of nearby states in a grid world are aggregated together; we will study such a model shortly. The definition of this abstraction is independent of the sequence of states the model was trained on and only depends on the model. From this definition co-aggregated states have similar pseudo-count, from Theorem \ref{th:pseudo}, for two ground states $s, s'$
\begin{equation*}
    (1 - \epsilon)^3 \hN(s', a) \leq \hN(s, a) \leq (1 + \epsilon)^3 \hN(s', a).
\end{equation*}

Suppose that the induced pseudo-count $\hat N^{\text{A}}$ satisfies Assumption \ref{ass:multiplicative}. One may expect that this is sufficient to obtain similar guarantees to those of Theorem \ref{thm:variant_with_bonus}, by relating the ground pseudo-count $\hat N$ (computed from $\rho$) to the abstract pseudo-count $\hat N^{\text A}$ (which we could compute from $\rho_n^{\text A}$). In particular, for a small $\epsilon$, we may expect the following relationship
\begin{equation*}
    \hNsa = \frac{\hNasa }{| G(s) |},
\end{equation*}
whereby an abstract state's pseudo-count is divided uniformly between the pseudo-counts of the states of the abstraction. Surprisingly, this is not the case, and in fact as we will show $\hat N$ is greater than its corresponding $\hat N^{\text{A}}$. The following makes this precise:

\begin{restatable}{lemma}{approxineq}
Let $\rho$ be a density model and $\ra$, $\hN$, $\hNa$, and $\hna$ as before. Then for $a \in \mathcal{A}$ and $s \in \mathcal{S}_G$
\begin{equation*}
\;  \hNasa \cdot f(\bar{s}, a, \epsilon) \leq \hNsa \leq \hNasa \cdot g(\bar{s}, a, \epsilon).
\end{equation*}
with $\alpha_\epsilon = \frac{1 - \epsilon}{1 + \epsilon}$, $f$ and $g$ are given by:
\small{
\begin{align*}
f(\bar{s}, a, \epsilon) &= \frac{| G(s) | (\hna + 1) - (1 + \epsilon)^3 (\hNasa + 1) }{| G(s) | (\tfrac{1}{\alpha_\epsilon^3} \hna -  \hNasa + (\tfrac{1}{\alpha_\epsilon^3} - 1) \hna \hNasa)}  \\
g(\bar{s}, a, \epsilon) &= \frac{| G(s) | (\hna + 1) - (1 - \epsilon)^3 (\hNasa + 1) }{| G(s) | (\alpha_\epsilon^3 \hna -  \hNasa - (1 - \alpha_\epsilon^3) \hna \hNasa)}
\end{align*}
}
\end{restatable}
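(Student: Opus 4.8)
The plan is to expand the ground pseudo-count through its defining formula, $\hNsa = \rsa\,(1-\rpsa)\,/\,(\rpsa-\rsa)$, and to replace each density quantity by an abstract one using the co-aggregated pseudo-count estimate already recorded above the lemma, namely $(1-\epsilon)^3\hN(s',a) \le \hNsa \le (1+\epsilon)^3\hN(s',a)$ for every $s' \in G(\bs)$. Writing $\rasa = \sum_{s' \in G(\bs)}\rho_n(s',a) = \big(\sum_{s'}\hN(s',a)\big)/\hat n$ and summing this estimate over the $\gs$ co-aggregated states yields the two-sided multiplicative bound $\tfrac{(1-\epsilon)^3}{\gs}\,\rasa \le \rsa \le \tfrac{(1+\epsilon)^3}{\gs}\,\rasa$. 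Applying the same reasoning to the sequence extended by $s$ --- legitimate because the induced abstraction is required to hold for all $n$ and independently of the training sequence --- gives the analogous bound of $\rpsa$ against $\rapsa$.

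From here I would bound the increment $\rpsa - \rsa$ in the denominator by differencing the two one-sided bounds, e.g.\ $\rpsa - \rsa \le \tfrac{1}{\gs}\big((1+\epsilon)^3\rapsa - (1-\epsilon)^3\rasa\big)$, and then substitute all three estimates into the pseudo-count formula with the worst-case assignment for each target bound. For the lower bound $f$ this means taking $\rsa$ at its lower bound and $\rpsa$ at its upper bound (which simultaneously shrinks the numerator factor $1-\rpsa$ and enlarges the denominator), and symmetrically for $g$. This leaves $\hNsa$ sandwiched between two rational expressions in $\rasa$, $\rapsa$, $\gs$ and $\epsilon$.

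The last step is to eliminate the abstract densities through the abstract pseudo-count identities $\rasa = \hNasa/\hna$ and $\rapsa = (\hNasa+1)/(\hna+1)$, so that $\rapsa - \rasa = (\hna - \hNasa)/(\hna(\hna+1))$; clearing the common $\hna(\hna+1)$ denominator produces the $\hna+1$, $\hNasa+1$ and $\hna\hNasa$ terms of $f$ and $g$. The key simplification is that, after substitution, the numerator and denominator of the lower bound share a factor $(1-\epsilon)^3$ (and the upper bound a factor $(1+\epsilon)^3$); cancelling it turns the leftover $(1+\epsilon)^3/(1-\epsilon)^3$ into $1/\alpha_\epsilon^3$ in the denominator of $f$ and, dually, into $\alpha_\epsilon^3$ in $g$, matching the stated functions exactly. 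A quick sanity check at $\epsilon = 0$ collapses $f$ and $g$ to the common value $(\gs(\hna+1) - (\hNasa+1))/(\gs(\hna-\hNasa))$, which exceeds $1$ whenever $\gs > 1$, recovering the claimed inequality $\hNsa > \hNasa$.

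The step I expect to be the main obstacle is the denominator increment. Because $\rpsa - \rsa$ is obtained by subtracting a lower bound from an upper bound, the combination $(1+\epsilon)^3\rapsa - (1-\epsilon)^3\rasa$ does not factor cleanly on its own, and it is only after the pseudo-count identities are inserted that the shared $(1\mp\epsilon)^3$ factor appears and cancels. Verifying this cancellation, confirming that the resulting denominator bracket $\tfrac{1}{\alpha_\epsilon^3}\hna - \hNasa + (\tfrac{1}{\alpha_\epsilon^3}-1)\hna\hNasa$ stays positive so that the bound is non-vacuous, and checking that the worst-case assignments are consistent across the three factors, is the delicate bookkeeping that the proof must carry out carefully.
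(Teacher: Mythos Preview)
Your proposal is correct and follows essentially the same route as the paper's own proof: sum the co-aggregated density bounds to obtain $(1-\epsilon)^3\rasa/\gs \le \rsa \le (1+\epsilon)^3\rasa/\gs$ (and the analogous bound on $\rpsa$), insert the worst-case assignment of these into $\hNsa = \rsa(1-\rpsa)/(\rpsa-\rsa)$, and then substitute $\rasa=\hNasa/\hna$, $\rapsa=(\hNasa+1)/(\hna+1)$ to reach the stated $f$ and $g$. Your observation that a common factor $(1-\epsilon)^3$ (resp.\ $(1+\epsilon)^3$) can be pulled from the denominator after substitution, producing the $1/\alpha_\epsilon^3$ and $\alpha_\epsilon^3$ terms, is exactly the simplification the paper performs; your plan is in fact more explicit about this step than the paper's terse derivation.
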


\begin{restatable}{corollary}{abstoground}
\label{lem:abstoground}
For an exact abstraction ($\epsilon = 0$)
\begin{equation*}
\hNsa = \hNasa \Bigg( 1 + \frac{(| G(s) | -1)(\hNasa + 1) }{| G(s) | (\hna -  \hNasa)} \Bigg).
\end{equation*}
\end{restatable}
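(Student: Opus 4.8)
The plan is to obtain this corollary as the exact-abstraction specialization of the preceding lemma, simply setting $\epsilon = 0$ throughout. The first step is to record what happens to the $\epsilon$-dependent quantities: at $\epsilon = 0$ the contraction factor degenerates to $\alpha_\epsilon = \frac{1-\epsilon}{1+\epsilon} = 1$, so that $\alpha_\epsilon^3 = 1$, $\tfrac{1}{\alpha_\epsilon^3} = 1$, and the multiplicative slack factors $(1+\epsilon)^3$ and $(1-\epsilon)^3$ both collapse to $1$. These are precisely the quantities that separate the lower bound $f$ from the upper bound $g$, so I expect the two to coincide once $\epsilon$ is set to zero and the sandwich inequality to tighten into an exact identity.

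Concretely, I would substitute $\epsilon = 0$ into both $f(\bar{s}, a, \epsilon)$ and $g(\bar{s}, a, \epsilon)$. In the numerators the factors $(1+\epsilon)^3$ and $(1-\epsilon)^3$ become $1$, leaving the common numerator $\gs(\hna + 1) - (\hNasa + 1)$; in the denominators the cross terms $(\tfrac{1}{\alpha_\epsilon^3} - 1)\hna\,\hNasa$ and $(1 - \alpha_\epsilon^3)\hna\,\hNasa$ both vanish, leaving the common denominator $\gs(\hna - \hNasa)$. Hence $f(\bar{s}, a, 0) = g(\bar{s}, a, 0)$, and the two-sided bound $\hNasa \cdot f \le \hNsa \le \hNasa \cdot g$ from the lemma forces
\begin{equation*}
\hNsa = \hNasa \cdot \frac{\gs(\hna + 1) - (\hNasa + 1)}{\gs(\hna - \hNasa)}.
\end{equation*}

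The final step is purely algebraic: I would rewrite the common fraction so that its leading term is $1$. Splitting the numerator as $\gs(\hna + 1) - (\hNasa + 1) = \gs(\hna - \hNasa) + (\gs - 1)(\hNasa + 1)$ and dividing by $\gs(\hna - \hNasa)$ yields exactly the claimed form $1 + \frac{(\gs - 1)(\hNasa + 1)}{\gs(\hna - \hNasa)}$.

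I do not anticipate a genuine obstacle, since the statement is a direct specialization; the only point deserving care is the bookkeeping verifying that \emph{all} discrepancies between $f$ and $g$ vanish simultaneously at $\epsilon = 0$, so that the interval in the lemma actually degenerates to a single value rather than merely narrowing. It is worth emphasizing in passing that the resulting correction factor strictly exceeds $1$ whenever $\gs > 1$ (using $\hna > \hNasa$, which holds since $\rho^{\text{A}}_n(\bar{s},a) = \hNasa/\hna \le 1$), confirming the mismatch flagged before the lemma: the ground pseudo-count is strictly larger than the naive guess $\hNasa / \gs$, rather than equal to it.
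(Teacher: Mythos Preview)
Your proposal is correct and follows essentially the same route as the paper: set $\epsilon = 0$ in the preceding lemma so that $f$ and $g$ coincide, then split the numerator to extract the leading $1$. Your version is in fact slightly more explicit than the paper's, which simply writes the resulting fraction and its rearrangement without spelling out the degeneration of the sandwich bounds.
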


Two remarks are in order. First, for any kind of aggregation, $| G(s) | > 1$ implies $\hat{N}(s, a) > \hat{N}^\text{A} (\bar{s}, a)$. Second, $\hat N(s, a) - \hat{N}^{\text{A}} (\bar{s}, a) \to + \infty$ when $\hat{N}^{\text{A}} (\bar{s}, a) \to \hna$, that is, as the density concentrates within a single aggregation, then the pseudo-counts for individual states grow unboundedly.
Our result highlights an intriguing property of pseudo-counts: when the density model generalizes (in our case, by assigning the same probability to aggregated states) then the pseudo-counts of individual states increase faster than under the true, empirical density.

One particularly striking instance of this effect occurs when $\rho$ is itself defined from an abstraction $\phi$. That is, consider the density model $\rho$ which assigns a uniform probability to all states within an aggregation:
\begin{equation}
\label{eq:density_abs}
    \rho_n (s, a) = \frac{N^{\text A}_n (\phi(s), a)}{|G(s)| n} .
\end{equation}
Lemma \ref{lem:abstoground} applies and we deduce that the pseudo-count associated with $s$ is greater than the visit count for its aggregation: $\hNsa > N^{\text A}_n (s, a)$.
From Lemma \ref{lem:abstoground} we conclude that, unless the induced abstraction is trivial, we cannot prevent under-exploration when using a pseudo-count based bonus. One way to derive meaningful guarantees is to bound the lemma's multiplicative constant, by requiring that no abstraction be visited too often.
\begin{restatable}{proposition}{pseudobound}
\label{prop:pseudobound}
Consider a state $s \in \mathcal{S}_G$. If there exists $k > 1$ such that $0 \le \hNsa \le \tfrac{\hna}{k}$ then for an exact abstraction:
\begin{equation*}
\hNasa \leq \hNsa \le \hNasa \Big( 1 + \frac{2}{k-1} \Big).
\end{equation*}
\end{restatable}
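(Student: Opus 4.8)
The plan is to derive both inequalities directly from Corollary~\ref{lem:abstoground}, which for an exact abstraction gives the exact identity
\begin{equation*}
\hNsa = \hNasa \Bigg( 1 + \frac{(\gs - 1)(\hNasa + 1)}{\gs (\hna - \hNasa)} \Bigg),
\end{equation*}
so that the whole proposition reduces to pinching the multiplicative factor between $1$ and $1 + \frac{2}{k-1}$. First I would check the formula is well defined: it is finite only when $\hna > \hNasa$ (otherwise the ground pseudo-count diverges, as recorded in the second remark following the corollary), and since the hypothesis posits a finite bound $\hNsa \le \hna/k$ we must have $\hna - \hNasa > 0$. Consequently the correction term is nonnegative, which immediately yields the lower bound $\hNsa \ge \hNasa$; feeding this back into the hypothesis gives $\hNasa \le \hNsa \le \hna/k$, and therefore $\hna - \hNasa \ge \hna(k-1)/k$.

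For the upper bound I would discard the prefactor using $\frac{\gs - 1}{\gs} < 1$ (valid for any $\gs \ge 1$), reducing the task to establishing
\begin{equation*}
\frac{\hNasa + 1}{\hna - \hNasa} \le \frac{2}{k-1}.
\end{equation*}
The natural move is to split the left-hand side as $\frac{\hNasa}{\hna - \hNasa} + \frac{1}{\hna - \hNasa}$ and bound each summand by $\frac{1}{k-1}$ separately, which is precisely what produces the factor $2$ in the statement.

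For the first summand, the map $x \mapsto x/(\hna - x)$ is increasing on $[0,\hna)$, so substituting the worst case $\hNasa = \hna/k$ gives exactly $\frac{\hna/k}{\hna(k-1)/k} = \frac{1}{k-1}$. For the second summand, using $\hna - \hNasa \ge \hna(k-1)/k$ gives $\frac{1}{\hna - \hNasa} \le \frac{k}{\hna(k-1)}$, which is at most $\frac{1}{k-1}$ exactly when $\hna \ge k$. The main (and essentially only) obstacle is this additive $+1$ in the numerator: unlike the leading term it does not scale away with the count, and controlling it requires the mild regime condition $\hna \ge k$ — equivalently $\hNasa \ge 1$, given $\hNasa \le \hna/k$ — which holds as soon as the total pseudo-count has accumulated past the threshold constant $k$. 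I would state this regime assumption explicitly (noting that for $\hna < k$ the model has not yet concentrated enough mass for the bound to carry meaning), and then combine the two summand estimates to obtain $\frac{(\gs-1)(\hNasa+1)}{\gs(\hna - \hNasa)} \le \frac{2}{k-1}$, which completes the upper bound.
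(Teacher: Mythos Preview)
Your argument is essentially the paper's own: start from Corollary~\ref{lem:abstoground}, read off the lower bound from nonnegativity of the correction term, then bound the correction by substituting the worst case $\hNasa \le \hna/k$ and discarding the factor $(\gs-1)/\gs < 1$; the paper simply keeps $\hNasa+1$ together rather than splitting it into two summands, arriving at $(\hna+k)/(\hna(k-1)) \le 2/(k-1)$. You are in fact more careful than the paper on two points: you explicitly deduce $\hNasa \le \hna/k$ from the stated hypothesis on $\hNsa$ via the lower bound, and you flag the regime condition $\hna \ge k$ that the paper's last step uses silently.
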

In particular this result justifies how pseudo-counts generalize to unseen states, while a pair $(s, a)$ may have not been observed, the pseudo-count $\hNasa$ will increases as long as other pairs in the same aggregation are being visited.
 
One way to guarantee the existence of a uniform constant $k$ in Proposition \ref{prop:pseudobound} is to inject random noise in the behaviour of the agent, for example by acting $\epsilon$-greedily with respect to the MBIE-EB $Q$-values. In this case, a bound on $k$ can be derived by considering the rate of convergence to the stationary distribution of the induced Markov chain \citep[see e.g.][]{fill91eigenvalue}.

\begin{figure*}[!tbp]
\centering
\begin{subfigure}{.4\textwidth}
    \centering
  \begin{tikzpicture}[auto,node distance=10mm,>=latex,font=\small]

    \tikzstyle{round}=[thick,draw=black,circle]
    
    \node[round] (s0) {$s_0$};
    \node[round,below=4mm of s0] (s1) {$s_1$};
    \node[round,below=10mm of s1] (s2) {$s_{t}$};
    \node[round,below right=0mm and 15mm of s1] (s5) {$T_1$};
    \node[round,below left=0mm and 15mm of s1] (s6) {$T_0$};
    
    \draw[black,dashed](0,-1.6)--(0,-2.3);
    \node[below=0.15cm of s5] {$R = L$};
    \node[below=0.15cm of s6] {$R = \epsilon$};
                 
    \draw[->, red] (s0) edge node{$1$} (s6);
    \draw[->, red] (s1) edge node{} (s6);
    \draw[->, red] (s2) edge node{} (s6);
    
    \draw[->,blue] (s0) edge node{} (s5);
    \draw[->, blue] (s1) edge node{} (s5);
    \draw[->, blue] (s2) edge node{$p$} (s5);
    \draw[<-,blue] (s0) edge[loop right] node{$1 - p$} (s0);
    \draw[blue] (s1) edge[loop right] node{} (s1);
    \draw[blue] (s2) edge[loop right] node{} (s2);
  \end{tikzpicture}
  \caption{Challenging MDP for exploration. Transitions for action \emph{left} are in red and in blue for action \emph{right}.}
  \label{fig:graph_over}
\end{subfigure}%
\hfill
\begin{subfigure}{.5\textwidth}
\centering
  \includegraphics[width=.65\linewidth]{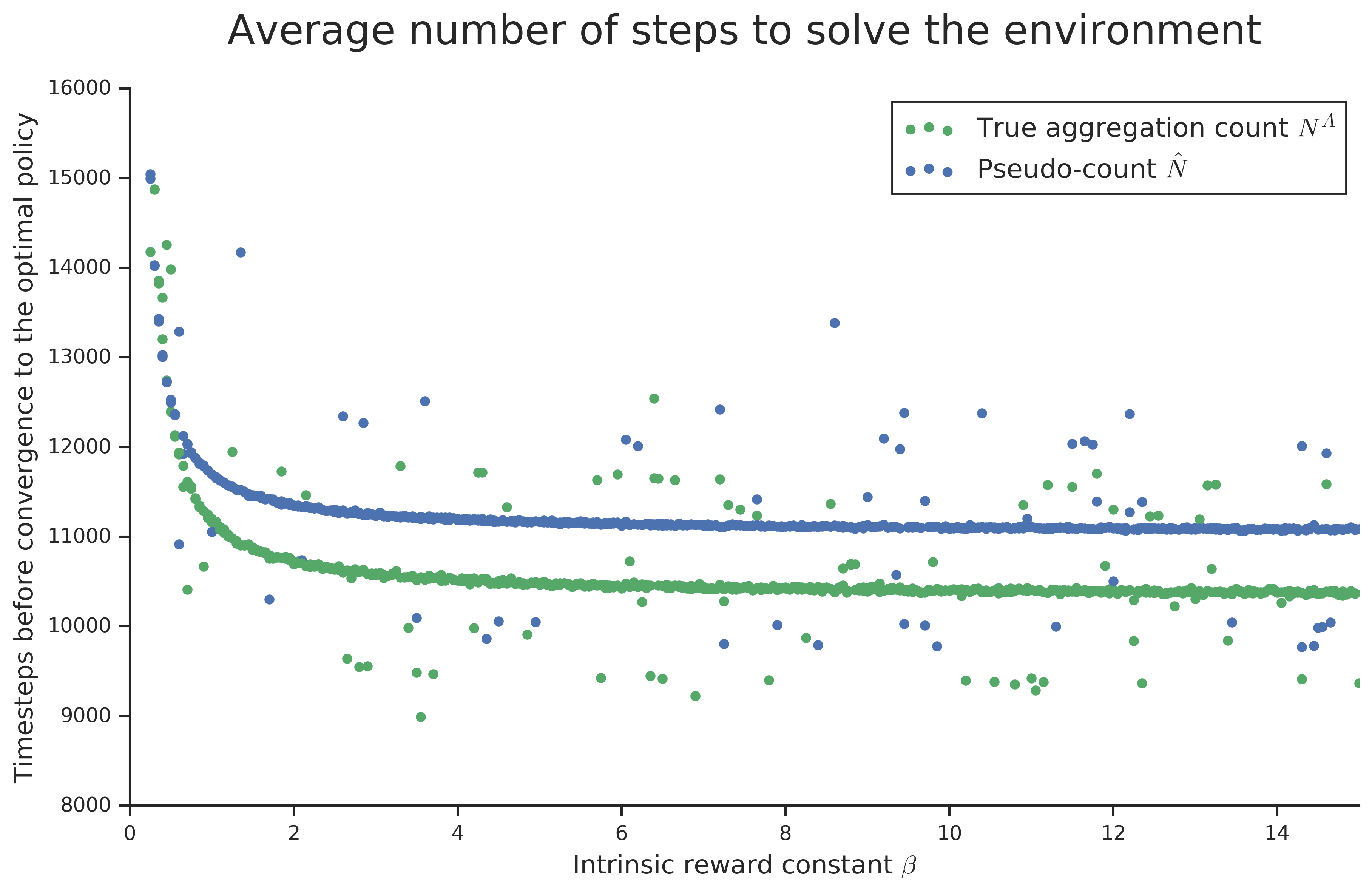}
  \caption{Average time to convergence to the optimal policy as a function of $\beta$.}
  \label{fig:result_over}
\end{subfigure}%
\caption{Quantifying the impact of pseudo-counts over-estimation on exploration}
\label{fig:overestimation}
\end{figure*}

\subsection{Over-estimation impact on exploration}

We provide now an example of MDP (see Figure \ref{fig:graph_over}) where the overestimation described previously can hurt exploration. \\
In this example, the initial state distribution is uniform over states $s_0, ..., s_t$. Each episode lasts for a single timestep. The agent can either choose the action \emph{left}, transition to $T_0$ collecting a small reward $\epsilon$ or choose the action \emph{right} which leads to state $T_1$ with probability $p$  collecting a reward $L \gg \epsilon$, otherwise, the agent remains at the same state. In this setting it seems natural to aggregate states $s_0, ..., s_t$ as they share similar properties. \\
We apply MBIE-EB on this environment and compare pseudo-counts derived from a density model similar to Equation \ref{eq:density_abs} with the empirical count of the aggregation. From Corollary \ref{lem:abstoground} we know that $\hat N_n (s_i, a) > N^{\text{A}}_n (\bar{s}_i, a)$ for any action $a$ and state $s_i$, furthermore at the beginning of training, as the agent explores and alternate between the two actions at similar frequency, the overestimation grows linearly. When $p$ is small this can induce the agent to under-explore and choose the sub-optimal action \emph{left}. We run our example with $L = 100, \, \epsilon = 0.001, \, p=1/10000$, action \emph{left} value is $0.001$ whereas it is $0.01$ for action \emph{right}. Figure \ref{fig:result_over} shows the time to converge to the optimal policy over 20 seeds for different values of MBIE-EB constant $\beta$. While this example is pathological, it shows the impact pseudo-count over-estimation can have on exploration, in the next section we provide a way around this issue. \\

\subsection{Correcting for counts over-estimation}
The over-estimation issue detailed in Corollary \ref{lem:abstoground} is a consequence of pseudo-counts postulating in their defintion that the count of a single state should increase after updating the density model. In practice when a state is visited the count of every other state in the same aggregation should increase too. It is possible to derive a new pseudo-count bonus verifying this property as we shall see now
\begin{restatable}{theorem}{improvedpseudo}
\label{improved_pseudo}
Let $\tN$ be the pseudo-count defined such that for any state-action pair $(s, a)$
\begin{equation*}
\label{eq:real_sys}
    \rho_n (s,a) = \frac{\tNsa}{\tilde{n}}, \quad  \rho_n '(s, a) = \frac{\tNsa + 1}{\tilde{n}+ |G(s)|}.
\end{equation*}
with $\tilde{n}$ the pseudo-count total. Then $\tN$ can be computed as follow
\begin{equation*}
\tNsa = \frac{2 \rsa \tau'_n (s,a)}{\rppsa \tau_n (s,a) - \rsa \tau'_n (s,a)}.
\end{equation*}
where $\rppsa \coloneqq \rho'(s, a; s_{1:n}s, a_{1:n}a)$ and $\tau_n (s,a) = \rpsa - \rsa$. \\
For an exact induced abstraction, $\tilde N$ does not suffer from the over-estimation previously mentioned and we have 
$$\tNsa = \tilde{N}_n^\text{A} (\bar{s}, a) = \hNasa.$$
for any state action pair $(s, a)$.
\end{restatable}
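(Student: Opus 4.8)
The plan is to split the statement into two independent parts: first deriving the closed form for $\tilde N_n(s,a)$, and then specializing to an exact induced abstraction.

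For the formula, the two defining relations only constrain the density before and after a single observation, yet they involve three unknowns: $\tilde N_n(s,a)$, the total $\tilde n$, and the aggregation size $|G(s)|$. The natural way to close the system is to iterate the postulated update once more, so that a second observation of $(s,a)$ sends the pseudo-count to $\tilde N_n(s,a)+2$ and the total to $\tilde n + 2|G(s)|$, giving $\rho_n^{(2)}(s,a) = (\tilde N_n(s,a)+2)/(\tilde n + 2|G(s)|)$. First I would write the three equations $\tilde N_n = \tilde n\,\rho_n(s,a)$, $\tilde N_n + 1 = (\tilde n + |G(s)|)\rho_n'(s,a)$, and $\tilde N_n + 2 = (\tilde n + 2|G(s)|)\rho_n^{(2)}(s,a)$, then eliminate $\tilde n$ and $|G(s)|$ by subtracting consecutive equations. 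Writing $\tau_n(s,a)=\rho_n'(s,a)-\rho_n(s,a)$ and $\tau_n'(s,a)=\rho_n^{(2)}(s,a)-\rho_n'(s,a)$, the first difference gives $1 = \tilde n\,\tau_n + |G(s)|\,\rho_n'$ and the second gives $1 = \tilde n\,\tau_n' + |G(s)|(2\rho_n^{(2)}-\rho_n')$. Equating these yields $|G(s)| = \tilde n\,(\tau_n-\tau_n')/(2\tau_n')$; substituting back and using $\tilde N_n = \tilde n\,\rho_n$ produces the stated expression, where the denominator $\rho_n^{(2)}\tau_n - \rho_n\tau_n'$ appears after expanding and collecting $\rho_n'(\rho_n^{(2)}+\rho_n) - 2\rho_n\rho_n^{(2)}$. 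This part is pure elimination; the only care needed is bookkeeping of the differences (and noting that the formula is valid precisely when the update iterates consistently over the two virtual steps).

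For the exact abstraction, write $m = |G(s)| = |G(\bar s)|$ and aim to show that the pair $(\tilde N_n(s,a),\tilde n) = (\hat N^{\text A}(\bar s,a),\, m\,\hat n^{\text A})$ satisfies the two defining equations of $\tilde N_n$, so that the claim follows by uniqueness of their solution. Two identities drive this. The first is immediate from exactness: $\rho_n(s',a)=\rho_n(s,a)$ for every $s'\in G(\bar s)$, hence $\rho_n^{\text A}(\bar s,a) = \sum_{s'\in G(\bar s)}\rho_n(s',a) = m\,\rho_n(s,a)$, i.e. $\rho_n(s,a)=\rho_n^{\text A}(\bar s,a)/m$. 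The second, which is the crux, is $\rho_n'(s,a) = \rho_n^{\text A'}(\bar s,a)/m$: here $\rho_n^{\text A'}$ is the abstract density after one abstract observation of $\bar s$, which operationally means training the ground model on the extended sequence $s_{1:n}s$ and summing over the aggregation. Since the induced abstraction is sequence-independent, its equal-density property applies to $s_{1:n}s$ as well, giving $\rho(s',a;s_{1:n}s) = \rho(s,a;s_{1:n}s) = \rho_n'(s,a)$ for all $s'\in G(\bar s)$, and summing yields $\rho_n^{\text A'}(\bar s,a) = m\,\rho_n'(s,a)$.

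To conclude, I would substitute $\rho_n^{\text A} = \hat N^{\text A}/\hat n^{\text A}$ and $\rho_n^{\text A'} = (\hat N^{\text A}+1)/(\hat n^{\text A}+1)$ into the two identities, checking directly that $\hat N^{\text A}/(m\,\hat n^{\text A}) = \rho_n(s,a)$ and $(\hat N^{\text A}+1)/(m\,\hat n^{\text A}+m) = \rho_n'(s,a)$, so the proposed pair solves both defining equations; uniqueness then gives $\tilde N_n(s,a) = \hat N^{\text A}(\bar s,a)$. The middle equality $\tilde N_n^{\text A}(\bar s,a) = \hat N^{\text A}(\bar s,a)$ is immediate, since at the abstract level there is no further aggregation, so the relevant aggregation size is $1$ and the new update rule collapses to the ordinary pseudo-count update. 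I expect the main obstacle to be the second identity of the previous paragraph: it requires correctly interpreting $\rho_n^{\text A'}$ as arising from a ground observation inside the aggregation and then invoking sequence-independence of the induced abstraction on the extended sequence $s_{1:n}s$ rather than on the original training sequence. The elimination in the first part is routine but error-prone, so I would verify the final denominator by the expansion noted above.
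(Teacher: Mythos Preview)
Your proposal is correct. For the closed-form derivation it follows the same three-equation setup as the paper; your elimination by subtracting consecutive equations is in fact tidier than the paper's more convoluted substitutions, and your check that the denominator expands to $\rho_n'(\rho_n^{(2)}+\rho_n)-2\rho_n\rho_n^{(2)}$ is exactly right.

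For the exact-abstraction claim the paper takes a slightly different route. Its (very terse) argument asserts $\tilde N_n(s,a)=\tilde N_n^{\text A}(\bar s,a)$ ``by construction'': this amounts to noting that the closed-form formula is homogeneous of degree zero in the densities, so replacing $\rho_n,\rho_n',\rho_n^{(2)}$ by $\rho_n^{\text A}=m\rho_n$, etc., leaves $\tilde N$ unchanged. It then uses, as you do, that $|G(\bar s)|=1$ at the abstract level to obtain $\tilde N_n^{\text A}=\hat N_n^{\text A}$. Your alternative of directly verifying that the pair $(\hat N_n^{\text A}(\bar s,a),\,m\,\hat n^{\text A})$ solves the two defining equations for $\tilde N_n(s,a)$ is equally valid and arguably more transparent, since it does not require the closed form at all; the two key identities $\rho_n=\rho_n^{\text A}/m$ and $\rho_n'=\rho_n^{\text A\prime}/m$ that you isolate are exactly what drives both arguments.
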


Theorem \ref{improved_pseudo} shows that it is possible to mitigate pseudo-counts over-estimation at the cost of more compute as the density model needs to be updated twice at each timestep. For the density model defined from an abstraction in Equation \eqref{eq:density_abs} the pseudo count $\tN$ will this time exactly match the count of the abstraction. It should also be noted that we have $\tN \leq \hN$, so any reward bonus derived from $\tN$ will be higher than if it was derived from $\hN$ instead which may be beneficial in the function approximation where the intrinsic reward would provide more signal.

\begin{figure*}
\centering
\begin{subfigure}{.5\textwidth}
    \centering
  \includegraphics[width=.35\linewidth]{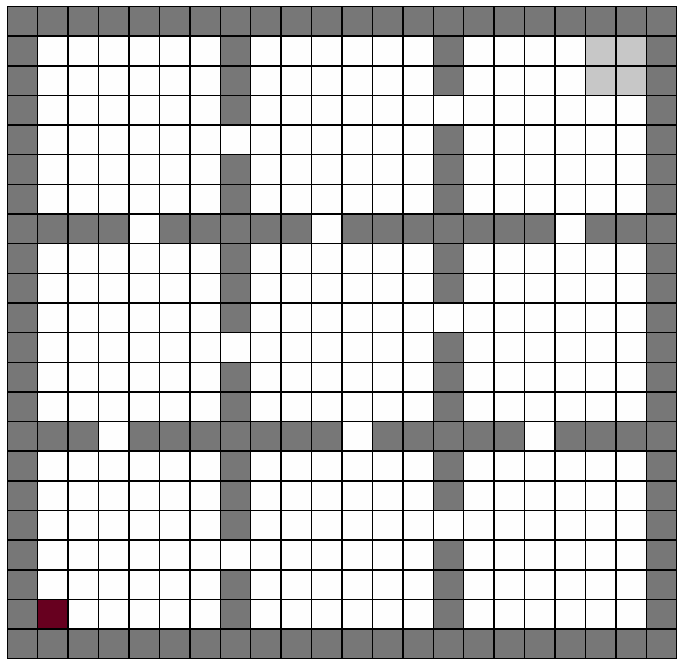}
  \caption{9-room domain}
  \label{fig:grid}
\end{subfigure}%
\hfill
\begin{subfigure}{.5\textwidth}
\centering
  \includegraphics[width=.35\linewidth]{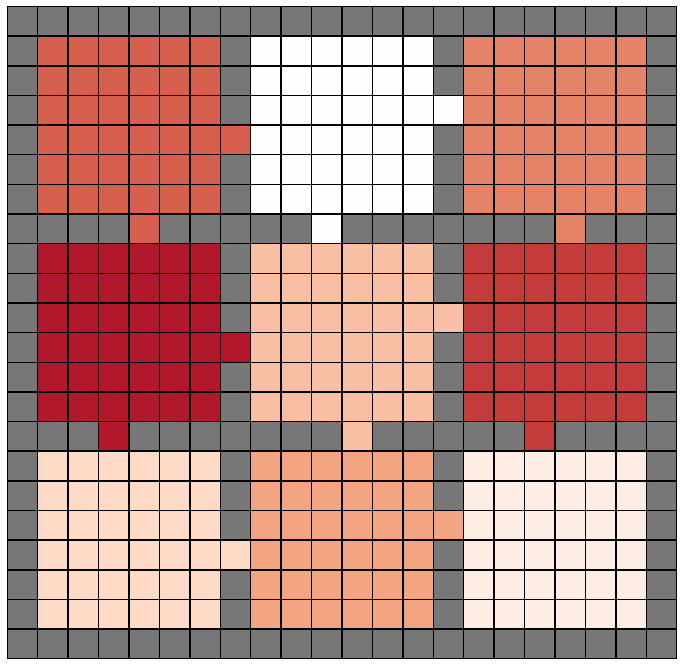}
  \caption{State abstraction defined by the density model}
  \label{fig:agg}
\end{subfigure}%
\caption{Domain used for evaluation}
\label{fig:domain}
\end{figure*}

\begin{figure*}
\centering
\captionsetup{justification=centering}
\begin{subfigure}{.25\textwidth}
    \centering
  \includegraphics[width=.9\linewidth]{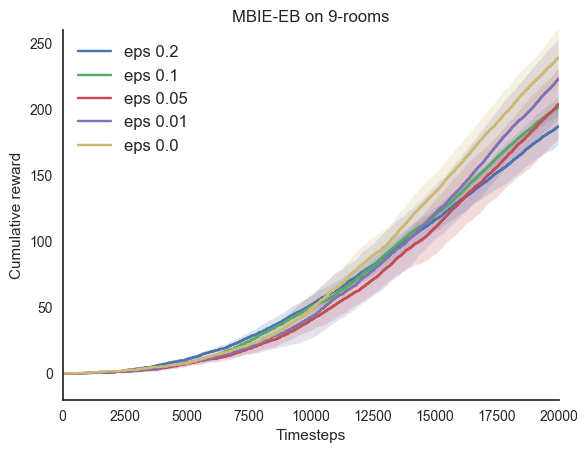}
  \caption{MBIE-EB varying $\epsilon$}
  \label{fig:sub1}
\end{subfigure}%
\begin{subfigure}{.25\textwidth}
\centering
  \includegraphics[width=.9\linewidth]{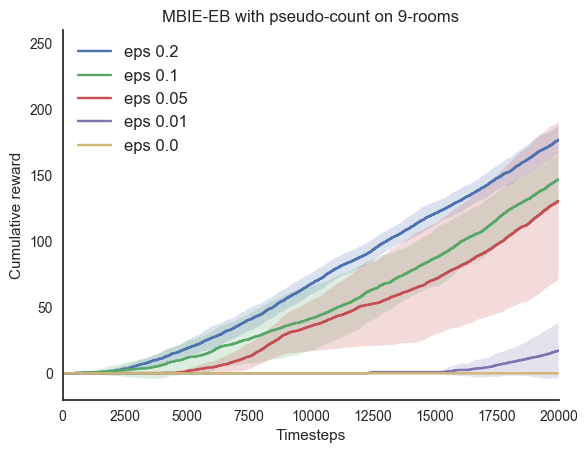}
  \caption{MBIE-EB-PC varying $\epsilon$}
  \label{fig:sub2}
\end{subfigure}%
\begin{subfigure}{.25\textwidth}
\centering
  \includegraphics[width=.9\linewidth]{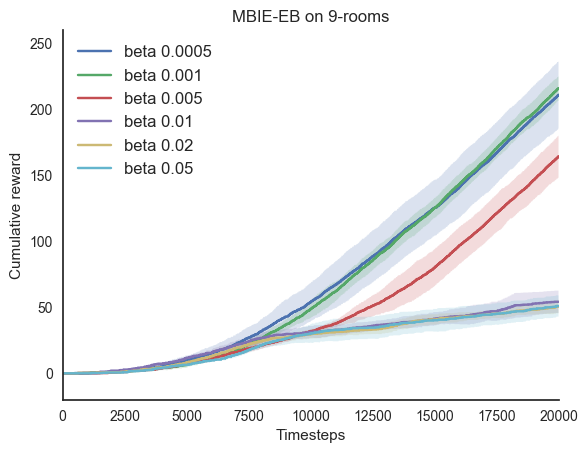}
  \caption{MBIE-EB varying $\beta$}
  \label{fig:sub3}
\end{subfigure}%
\begin{subfigure}{.25\textwidth}
\centering
  \includegraphics[width=.9\linewidth]{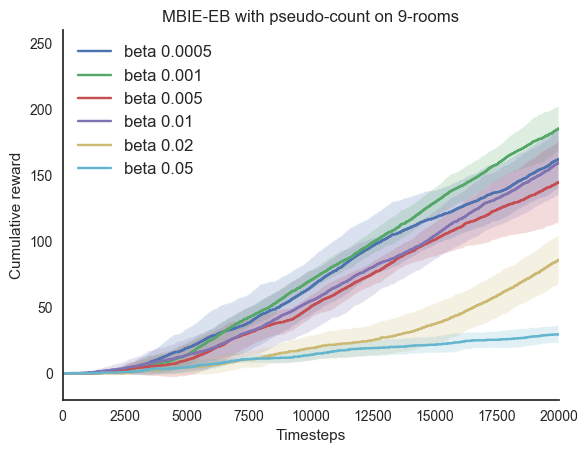}
  \caption{MBIE-EB-PC varying $\beta$}
  \label{fig:sub4}
\end{subfigure}%
\caption{Rewards accumulated by the agent on the 9-room domain. Figures \ref{fig:sub1} and \ref{fig:sub2} use a fixed value of $\beta = 1e-4$, where Figures \ref{fig:sub1} and \ref{fig:sub2} vary $\beta$ and use a fixed $\epsilon = 0.1$.}
\label{fig:exp}
\end{figure*}

\subsection{Empirical evaluation}
Combining Theorem \ref{th:pseudo} and Lemma 2 (or applying Theorem \ref{improved_pseudo}) we can bound the ratio of pseudo counts to empirical counts $\Nasa$ for a given abstraction verifying Assumption 1. Nevertheless the impact of a bonus derived from an abstraction to explore in the ground state space has not been quantified. This was referred to by \citet{bellemare2016unifying} as the lack of compatibility between the exploration bonus and the value function. While we were not able to derive theoretical results regarding this particular case, we provide an empirical study on a grid world.

We use a 9-room domain (see Figure ~\ref{fig:grid}) where the agent starts from the bottom left and needs to reach one of four top right states to receive a positive reward of 1. The agent has access to four actions: \emph{up}, \emph{down}, \emph{left}, \emph{right}. Transitions are deterministic; moving into walls leaves the agent in the same position. The environment runs until the agent reaches the goal, at which point the agent is rewarded and the episode  starts over from the initial position.

We compare MBIE-EB using the empirical count from Equation \eqref{eq:mbieeb} with the variant of MBIE-EB using pseudo-counts bonuses - MBIE-EB-PC - from Equation \eqref{eq:explicit_pseudo} 
\footnote{We did not notice any significant difference between $\hat N$ and $\tilde{N}$ and used $\hat N$ for all experiments}.
Pseudo-counts are derived from a density model (Equation ~\ref{eq:density_abs}) which assigns a uniform probability to states within the same room as shown in Figure~\ref{fig:agg}.
We also investigate the impact of an $\epsilon$-greedy policy as proposed in the previous subsection.
Figure \ref{fig:exp} depicts the cumulative rewards received by both our agents for different values of $\beta$ and $\epsilon$.  Each experiment is averaged over 5 seeds, shaded error represents variance. It demonstrates that:
\begin{itemize}
    \item MBIE-EB fulfills the task relatively well in most instances while the lack of compatibility between the value function and the pseudo count exploration bonus can impact performance to the point where MBIE-EB-PC fails completely (Figure \ref{fig:sub2}).
    \item While MBIE-EB is not much affected by the $\epsilon$-greedy policy, the $\epsilon$ parameter is critical for MBIEB-EB-PC. While the pseudo-count bonus provides a signal to explore across room., a high value of $\epsilon$ is necessary for the agent to maneuver within individual rooms. In order to avoid under-exploration, higher values of $\epsilon$ work best.
    \item By not assigning a count to every state action pair, MBIE-EB-PC can act greedily with respect to environment and achieves a higher cumulative reward in the first 10,000 timesteps than MBIE-EB.
    \item MBIE-EB-PC is more robust to a wider range of values of $\beta$, suggesting that exploration in the ground MDP is more subject to over-exploration.
\end{itemize}

\section{Related Work}
Performance bounds for efficient learning of MDPs have been thoroughly studied. In the context of PAC-MDP algorithms, model-based approaches such as \textsc{Rmax} \citep{brafman2002r}, MBIE and MBIE-EB \citep{strehl2008analysis} or $\text{E}^3$ \citep{kearns2002near} build an empirical model of a set of the environment state-actions pairs using the agent's past experience. \citet{strehl2006pac} also investigated the model-free case with delayed Q-learning and showed that they could lower the sample complexity dependence on state space dimension. Bayesian Exploration Bonus proposed by \citet{kolter2009near} is not PAC-MDP but offers the guarantee to act optimally with respect to the agent's prior except for a polynomial number of timesteps.

In the average reward case, UCRL \citep{jaksch2010near} was shown to obtain a low regret on MDPs with finite diameter. Many extensions exploit the structure of the MDP to improve further the regret bound \citep{ortner2013adaptive, osband2014near, hutter2014extreme, fruit2018efficient, NIPS2018_8103}. Similarly \citet{kearns1999efficient} presented a variant of $E^3$ which is also PAC-MDP. Temporal abstraction in the form of extended actions  \citep{sutton1999between} has been recently studied for exploration. \citet{brunskill2014pac} proposed a variant of \textsc{Rmax} for SMDPs and \citet{fruit2017exploration} extended UCRL to MDPs where a set of options is available, both have shown promising results when a good set of options is available.

Finding abstractions in order to handle large state spaces remains a long standing goal in reinforcement learning, a lot of work in the literature has been focused on finding metrics to quantify state similarity \citep{bean1987aggregation,andre2002state}. \citet{li2006towards} provided an unifying view on exact abstractions that preserve the optimality. Metrics related to the model similarity metric include bisimulation \citep{ferns2004metrics,ferns2012methods}, bounded parameters MDPs \citep{givan2000bounded}, $\epsilon$-similarity \citep{even2003approximate,ortner2007pseudometrics}.

\subsection*{Conclusion}
In this work we build on previous results related to state abstraction and exploration. We highlighted how they can help to understand better the success of exploration using pseudo-counts in the non-tabular case. As it turns out, with  finite time, optimal exploration might be too hard to obtain and we have to settle for approximate solution that trade off speed convergence and guarantee w.r.t to the policy learned.

It is unlikely that practical exploration will enjoy near-optimality guarantees as powerful as those given by theoretical methods. In most environments, there are simply too many places to get lost.
Alternative schemes -- such as the value-based exploration idea proposed by \citet{leike16exploration} -- may help but only so much.
In our work, we showed that abstractions allow us to impose a certain prior on the shape that exploration needs to take.

We also found that pseudo-count based methods, like other abstraction-based schemes, can fail dramatically when they are incompatible with the environment. While this is expected given their practical trade-off, we believe our work moves us towards a better understanding of bonus-based methods in practice. 
An interesting question is whether adaptive schemes can be designed that would enjoy both the speed of exploration of coarse abstractions with the near-optimality guarantees of fine ones.

\subsection*{Acknowledgements}
We would like to thank Mohammad Azar, Sai Krishna, Tristan Deleu, Alexandre Piché, Carles Gelada and Michael Noukhovitch for careful reading and insightful comments on an earlier version of the paper. This work was funded by FRQNT through the CHIST-ERA IGLU project.

\bibliographystyle{plainnat}
\bibliography{bibli}

\newpage
\begin{appendices}
\onecolumn


\section{Proofs}
\begin{lemma} \label{lemma:bound}
For a model similarity abstraction we have the following inequality:
\begin{gather*}
\forall s \in \Sg, \forall a \in \mathcal{A}, \quad | Q_G (s, a) - Q_A (\phi_{\eta}(s), a) | \le \frac{\eta + \gamma (| \Sa | - 1) \eta}{(1 - \gamma)^2}.
\end{gather*}
\end{lemma}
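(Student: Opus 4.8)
The plan is to control the maximum Q-value discrepancy
$\Delta := \max_{s \in \Sg,\, a \in \mathcal{A}} | Q_G(s,a) - Q_A(\phi_{\eta}(s), a) |$
by deriving a self-referential inequality from the two optimal Bellman equations and then solving for $\Delta$. Writing both Bellman equations and subtracting, $Q_G(s,a) - Q_A(\bs,a)$ decomposes into a reward gap and a discounted transition gap. For the reward gap I would use $\Ra(\bs,a) = \sum_{g \in G(\bs)} \omega(g)\,\Rg(g,a)$ together with $\sum_{g \in G(\bs)} \omega(g) = 1$ to write it as the convex combination $\sum_{g \in G(\bs)} \omega(g)\,[\Rg(s,a) - \Rg(g,a)]$; since every $g \in G(\bs)$ is co-aggregated with $s$, the reward clause of the model similarity condition bounds each summand by $\eta$, so the reward gap is at most $\eta$.

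For the transition gap I would first introduce the aggregated ground transition $P_G(s,a,\bs') := \sum_{s' \in G(\bs')} \Tg(s,a,s')$, the probability of landing in abstract state $\bs'$. The transition clause of model similarity states precisely that $|P_G(s_1,a,\bs') - P_G(s_2,a,\bs')| \le \eta$ for co-aggregated $s_1,s_2$, and because $\Ta(\bs,a,\bs') = \sum_{g \in G(\bs)} \omega(g)\, P_G(g,a,\bs')$ is a convex combination, it follows that $|\Ta(\bs,a,\bs') - P_G(s,a,\bs')| \le \eta$ for each $\bs'$. I would then split the transition gap into two pieces after the regrouping $\sum_{s'} \Tg(s,a,s')\, V_A(\phi_\eta(s')) = \sum_{\bs'} P_G(s,a,\bs')\, V_A(\bs')$: term (A), equal to $\sum_{s'} \Tg(s,a,s')\,[V_G(s') - V_A(\phi_\eta(s'))]$, the error from replacing ground values by abstract ones; and term (B), equal to $\sum_{\bs'} [P_G(s,a,\bs') - \Ta(\bs,a,\bs')]\, V_A(\bs')$, the error from the transition mismatch.

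Term (A) is bounded by $\Delta$ using $|\max_{a'} Q_G(s',a') - \max_{a'} Q_A(\phi_\eta(s'),a')| \le \max_{a'} |Q_G(s',a') - Q_A(\phi_\eta(s'),a')| \le \Delta$ and $\sum_{s'} \Tg(s,a,s') = 1$. The crux is term (B), where a naive bound produces a spurious factor $|\Sa|$ instead of $|\Sa| - 1$. The key observation I would exploit is that $P_G(s,a,\cdot)$ and $\Ta(\bs,a,\cdot)$ are both probability distributions over $\Sa$, so their difference sums to zero. Fixing any reference abstract state $\bs_0'$ and substituting this zero-sum relation, I can rewrite (B) as $\sum_{\bs' \ne \bs_0'} [P_G(s,a,\bs') - \Ta(\bs,a,\bs')]\,[V_A(\bs') - V_A(\bs_0')]$, which has only $|\Sa| - 1$ terms, each with transition gap at most $\eta$ and value spread $|V_A(\bs') - V_A(\bs_0')| \le \textsc{Qmax} = 1/(1-\gamma)$, giving $|(B)| \le (|\Sa|-1)\,\eta/(1-\gamma)$.

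Collecting the pieces yields the recursion $\Delta \le \eta + \gamma \Delta + \gamma (|\Sa|-1)\,\eta/(1-\gamma)$, and solving for $\Delta$ gives $\Delta \le [\eta(1-\gamma) + \gamma(|\Sa|-1)\eta]/(1-\gamma)^2$, which is dominated by the stated bound since $1 - \gamma \le 1$. The main obstacle is term (B): securing the $|\Sa|-1$ factor rather than $|\Sa|$ hinges on the zero-sum/constant-subtraction trick and on recognizing that the relevant quantity is the \emph{spread} of the abstract values rather than their magnitude; the remaining steps are routine convexity and contraction bookkeeping.
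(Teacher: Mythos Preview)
Your argument is correct and follows the same Bellman-recursion skeleton as the paper: write both optimal Bellman equations, split into a reward gap and a transition gap, add and subtract $V_A(\phi_\eta(s'))$ to decompose the transition gap into your terms (A) and (B), bound (A) by $\Delta$, and solve the resulting contraction inequality.

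The one substantive difference is in how you handle term (B). You treat the $|\Sa|-1$ factor as ``the crux'' and invoke the zero-sum/constant-subtraction trick to secure it. The paper does not do this: it simply bounds $|(B)| \le |\Sa|\,\eta/(1-\gamma)$ using $|V_A(\bs')| \le 1/(1-\gamma)$ and $|\Sa|$ terms, obtaining the recursion
\[
\Delta \;\le\; \eta \;+\; \frac{\gamma\eta\,|\Sa|}{1-\gamma} \;+\; \gamma\Delta,
\]
and the $-1$ then appears purely algebraically when solving, since
\[
\frac{\eta(1-\gamma) + \gamma\eta\,|\Sa|}{(1-\gamma)^2}
\;=\;
\frac{\eta + \gamma(|\Sa|-1)\eta}{(1-\gamma)^2}.
\]
So your trick is unnecessary for the stated bound: the ``missing'' $-1$ is supplied by the $1-\gamma$ factor coming from the reward term, not from a sharper control of (B). Your route does yield the genuinely tighter intermediate bound $[\eta(1-\gamma) + \gamma(|\Sa|-1)\eta]/(1-\gamma)^2$, which you then relax; the paper's route is shorter but lands exactly on the stated constant.
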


\begin{proof}
Note that we have the following inequalities
\begin{gather*}
\forall s \in \Sg, a \in \mathcal{A} \quad  \vert \Rg (s, a) - \Ra (\bs, a) \vert \leq \eta, \\
\forall \bar{s}, \bar{s}' \in \Sa, a \in \mathcal{A}, g \in G(\bar{s}) \quad \vert \Ta (\bs, a, \bs') - \sum_{g' \in G(\bs')} \Tg (g,a,g') \vert \leq \eta.
\end{gather*}
Then
\begin{align*}
 | &Q_G (s, a) - Q_A (\bs, a) | \leq \\
 &| \Rg (s, a) - \Ra (\bs, a)  | + \gamma \bigg\lvert \sum_{\bs' \in \Sa} \big[ \sum_{s ' \in G(\bs')} \Tg (s, a, s') \max_{a'} Q_G(s', a')  \big] - \Ta (\bs, a, \bs') \max_{a''} Q_A (\bs', a'')  \bigg\rvert \\
&\leq \eta + \gamma \bigg\lvert \sum_{\bs' \in \Sa} \big[ \sum_{s ' \in G(\bs')} \Tg (s, a, s') \big( \max_a Q_G(s', a') - \max_{a''} Q_A (\bs', a'') + \max_{a''} Q_A (\bs', a'') \big)  \big] - \Ta (\bs, a, \bs') \max_{a''} Q_A (\bs', a'')  \bigg\rvert \\
&\leq \eta + \gamma \bigg\lvert \sum_{\bs' \in \Sa} \max_{a''} Q_A (\bs', a'') \Big[ (\sum_{s ' \in G(\bs)}  \Tg (s, a, s') \big) -  \Ta (\bs, a, \bs) \Big] \\
&+ \sum_{s ' \in G(\bs')}  \Tg (s, a, s') (\max_{a'} Q_G(s', a') - \max_{a''} Q_A (\bs', a''))  \big] \bigg\rvert \\
&\leq \eta + \frac{\gamma \eta  | \Sa  |}{1 - \gamma} + \gamma  \sum_{\bs' \in \Sa} \sum_{s' \in G(\bs)}  \Tg (s, a, s') \max_{a'}  | Q_G(s', a') - Q_A (\bs', a')  | \\
&\leq \eta + \frac{\gamma \eta  | \Sa  |}{1 - \gamma} + \gamma \max_{\bs', s' \in G(\bs'), a}  | Q_G(s', a) - Q_A (\bs', a)  |
\end{align*}
\end{proof}

\begin{remark}
The previous Lemma can be used to show a model similarity abstraction has sub-optimality bounded in $\eta$ and improves the bound of \citet{abel2016near}, which has a $1/(1 - \gamma)^3$ dependency, due to an issue in the original proof. To the best of our knowledge, ours is the first complete proof of this result.
\end{remark}

\begin{lemma}
A model similarity abstraction (Def. \ref{defn:model_similiarity}) has sub-optimality bounded in $\eta$
\begin{equation}
\label{eq:transfer}
\forall s \in \Sg, \, V_G (s) - V_G^{\pi_{GA}} (s) \le \frac{2\eta + 2\gamma (| Sa | - 1) \eta}{(1 - \gamma)^2}.
\end{equation}
\end{lemma}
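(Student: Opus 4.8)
The plan is to bound the sub-optimality by inserting the optimal abstract value $V_A$ and applying the triangle inequality,
\[
V_G(s) - V_G^{\pi_{GA}}(s) = \big(V_G(s) - V_A(\bs)\big) + \big(V_A(\bs) - V_G^{\pi_{GA}}(s)\big),
\]
bounding each summand by $C := \frac{\eta + \gamma(|\Sa|-1)\eta}{(1-\gamma)^2}$, the right-hand side of Lemma~\ref{lemma:bound}. The first summand is handled directly: writing $V_G(s) = \max_a Q_G(s,a)$ and $V_A(\bs) = \max_a Q_A(\bs,a)$ and using $|\max_a x_a - \max_a y_a| \le \max_a|x_a - y_a|$, Lemma~\ref{lemma:bound} immediately gives $|V_G(s) - V_A(\bs)| \le C$. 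I would deliberately avoid bounding $V_G - V_G^{\pi_{GA}}$ through a one-step greedy argument, which would cost an extra $1/(1-\gamma)$ factor and reproduce the $1/(1-\gamma)^3$ dependence we are trying to beat.

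For the second summand I would run a contraction argument on $\Delta := \sup_{s\in\Sg} |V_A(\phi(s)) - V_G^{\pi_{GA}}(s)|$. The key observation is that both values are taken at the \emph{same} action $\hat a := \pi_{GA}(s) = \pi_A^*(\bs)$: optimality of $\pi_A^*$ in $M_A$ gives $V_A(\bs) = Q_A(\bs,\hat a)$, while by definition $V_G^{\pi_{GA}}(s) = Q_G^{\pi_{GA}}(s,\hat a)$. Expanding both through their one-step Bellman equations yields
\[
V_A(\bs) - V_G^{\pi_{GA}}(s) = \big(\Ra(\bs,\hat a) - \Rg(s,\hat a)\big) + \gamma\Big(\expect_{\Ta(\cdot\,|\,\bs,\hat a)} V_A(\bs') - \expect_{\Tg(\cdot\,|\,s,\hat a)} V_G^{\pi_{GA}}(s')\Big),
\]
where the reward gap is at most $\eta$, since $\Ra(\bs,\hat a)$ is a convex combination of rewards $\Rg(g,\hat a)$ kept within $\eta$ of $\Rg(s,\hat a)$ by the model-similarity condition.

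The transition term is the main obstacle. I would rewrite the ground expectation as $\sum_{\bs'}\sum_{s'\in G(\bs')}\Tg(s,\hat a,s')V_G^{\pi_{GA}}(s')$, introduce $P(\bs') := \sum_{s'\in G(\bs')}\Tg(s,\hat a,s')$, and add and subtract $\sum_{\bs'}P(\bs')V_A(\bs')$ to split it into a model-mismatch part $\sum_{\bs'}\big(\Ta(\bs,\hat a,\bs') - P(\bs')\big)V_A(\bs')$ and a recursive part $\sum_{\bs'}\sum_{s'\in G(\bs')}\Tg(s,\hat a,s')\big(V_A(\bs') - V_G^{\pi_{GA}}(s')\big)$. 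The model-similarity transition condition gives $|\Ta(\bs,\hat a,\bs') - P(\bs')| \le \eta$ for each $\bs'$ (again because $\Ta$ is a convex combination over $G(\bs)$ of aggregated transition masses, each within $\eta$ of $P(\bs')$), so the mismatch part is at most $|\Sa|\eta/(1-\gamma)$ using $V_A \le 1/(1-\gamma)$, while the recursive part is at most $\Delta$ because the $\Tg$-weights sum to one and $\phi(s')=\bs'$.

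Collecting the pieces gives the recursion $\Delta \le \eta + \gamma|\Sa|\eta/(1-\gamma) + \gamma\Delta$, which rearranges to $\Delta \le \frac{\eta + \gamma(|\Sa|-1)\eta}{(1-\gamma)^2} = C$ after the identity $1 - \gamma + \gamma|\Sa| = 1 + \gamma(|\Sa|-1)$. Combining the two summands then yields $V_G(s) - V_G^{\pi_{GA}}(s) \le 2C$, which is exactly \eqref{eq:transfer}. The only delicate points are the same-action alignment that keeps the recursion first order in $1/(1-\gamma)$, and the careful aggregation of ground transitions into abstract ones when bounding the mismatch term.
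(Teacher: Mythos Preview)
Your proposal is correct and follows essentially the same route as the paper: the same triangle-inequality split through $V_A(\bs)$, the same use of Lemma~\ref{lemma:bound} (via the $\max$ inequality) for the first summand, and for the second summand the paper simply says ``using similar arguments than in Lemma~\ref{lemma:bound}'' to obtain $|V_G^{\pi_{GA}}(s) - V_A(\phi(s))| \le C$, which is precisely the contraction argument on $\Delta$ that you spell out in detail. Your explicit verification that both values are evaluated at the common action $\pi_A^*(\bs)$, and your derivation of the recursion $\Delta \le \eta + \gamma|\Sa|\eta/(1-\gamma) + \gamma\Delta$, are exactly what the paper's Lemma~\ref{lemma:bound} argument becomes when transported to the pair $(Q_G^{\pi_{GA}}, Q_A)$ at a fixed action.
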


\begin{proof}
Using similar arguments than in Lemma \ref{lemma:bound} we can show that:
\begin{equation*}
\begin{aligned}
| V^{\pi_{GA}}_G (s) -  V_A (\phi (s)) | &=  | Q_G^{\pi_{GA}} (s, \pi_{A}^* (\bar{s})) - Q_A (\bar{s}, \pi_{A}^* (\bar{s})) | \\
&\le \frac{\eta + \gamma (| \mathcal{S}_A | - 1) \eta}{(1 - \gamma)^2}
\end{aligned}
\end{equation*}
Then using Lemma \ref{lemma:bound} again, we have:
\begin{equation*}
\begin{aligned}
|  V_G (s) -  V_A (\bar{s}) | &= | \max_a Q_G (s, a) - \max_{a'} Q_A (\bar{s}, a') | \\
&\le \max_a | Q_G (s, a) - Q_A (\bar{s}, a) | \\
&\le \frac{\eta + \gamma (| \mathcal{S}_A | - 1) \eta}{(1 - \gamma)^2} \\
\end{aligned}
\end{equation*}
And we can conclude:
\begin{equation*}
\begin{aligned}
|  V_G (s) -  V^{\pi_{GA}}_G (s) | &\le | V_G (s) - V_A (\bar{s}) | + | V_A (\bar{s}) -  V^{\pi_{GA}}_G (s) | \\
&\le \frac{2\eta + 2\gamma (| \mathcal{S}_A | - 1) \eta}{(1 - \gamma)^2}
\end{aligned}
\end{equation*}

\end{proof}

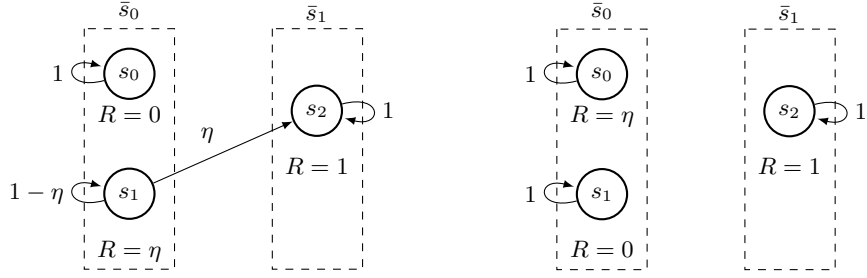
\begin{figure}[!tbp] 
  \centering
  \begin{tikzpicture}[auto,node distance=8mm,>=latex,font=\small]

    \tikzstyle{round}=[thick,draw=black,circle]
    
    \node[round] (s0) {$s_0$};
    \node[round,below right=0mm and 20mm of s0] (s2) {$s_2$};
    \node[round,below=9mm of s0] (s1) {$s_1$};
    
    \path (s1) edge[loop left] node{$1 - \eta$} (s1);
    \path (s0) edge[loop left] node{$1$} (s0);
    \path (s2) edge[loop right] node{$1$} (s2);
    \path [->] (s1) edge node{$\eta$} (s2);
    \node[thick,above=0.6cm] {$\bar{s}_0$};
    \node[thick,above=0.7cm of s2] {$\bar{s}_1$};
    \node[below=0.3cm] {$R = 0$};
    \node[below=0.15cm of s2] {$R = 1$};
    \node[below=0.15cm of s1] {$R = \eta$};
    \draw[dashed] (-0.6,0.6) rectangle (0.6,-2.6);
    \draw[dashed] (1.9,0.6) rectangle (3.1,-2.6);
  \end{tikzpicture}
  \qquad \qquad
  \begin{tikzpicture}[auto,node distance=8mm,>=latex,font=\small]

    \tikzstyle{round}=[thick,draw=black,circle]
    
    \node[round] (s0) {$s_0$};
    \node[round,below right=0mm and 20mm of s0] (s2) {$s_2$};
    \node[round,below=9mm of s0] (s1) {$s_1$};
    
    \path (s1) edge[loop left] node{$1$} (s1);
    \path (s0) edge[loop left] node{$1$} (s0);
    \path (s2) edge[loop right] node{$1$} (s2);
    \node[thick,above=0.6cm] {$\bar{s}_0$};
    \node[thick,above=0.7cm of s2] {$\bar{s}_1$};
    \node[below=0.3cm] {$R = \eta$};
    \node[below=0.15cm of s2] {$R = 1$};
    \node[below=0.15cm of s1] {$R = 0$};
    \draw[dashed] (-0.6,0.6) rectangle (0.6,-2.6);
    \draw[dashed] (1.9,0.6) rectangle (3.1,-2.6);

\end{tikzpicture}
  \caption{Rewards and transitions for $a_1$ (left) and $a_2$ (right) in the ground MDP}
  \label{fig:ground_mdp}
\end{figure}

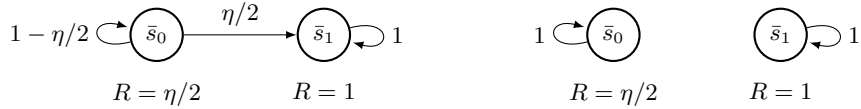
\begin{figure}[!tbp] 
  \centering
  \begin{tikzpicture}[auto,node distance=8mm,>=latex,font=\small]

    \tikzstyle{round}=[thick,draw=black,circle]
    
    \node[round] (s0) {$\bar{s}_0$};
    \node[round,right=15mm of s0] (s1) {$\bar{s}_1$};
    
    \path (s0) edge[loop left] node{$1 - \eta /2$} (s0);
    \path [->] (s0) edge node{$\eta / 2$} (s1);
    \path (s1) edge[loop right] node{$1$} (s1);
    \node[below=0.5cm] {$R = \eta / 2$};
    \node[below=0.15cm of s1] {$R = 1$};
    \end{tikzpicture}
  \qquad \qquad
  \begin{tikzpicture}[auto,node distance=8mm,>=latex,font=\small]

    \tikzstyle{round}=[thick,draw=black,circle]
    
    \node[round] (s0) {$\bar{s}_0$};
    \node[round,right=15mm of s0] (s1) {$\bar{s}_1$};
    
    \path (s0) edge[loop left] node{$1$} (s0);
    \path (s1) edge[loop right] node{$1$} (s1);
    \node[below=0.5cm] {$R = \eta / 2$};
    \node[below=0.15cm of s1] {$R = 1$};
    \end{tikzpicture}
  \caption{Rewards and transitions for $a_1$ (left) and $a_2$ (right) in the abstract MDP}
  \label{fig:abstract_mdp}
\end{figure}

\prop*
\begin{proof}
Consider a three states MDP with two actions $a_1$ and $a_2$ (Figure \ref{fig:ground_mdp}). When states $\{s_0, s_1\}$ and $\{s_2\}$ are aggregated in the abstract states $\bar{s}_0$ and $\bar{s}_1$ this MDP defines a model similarity abstraction of parameter $\eta$ (Figure \ref{fig:abstract_mdp}). In $\bar{s}_0$ we can either choose the policy $\pi_1$ such that $\pi_1 (\bar{s}_0) = a_1$ or  $\pi_2$ such that $\pi_2 (\bar{s}_0) = a_2$. Using the Belleman equation we can compute the value of $\bar{s}_0$ under each policy which yields:
\begin{align*}
V^{\pi_1} (\bar{s}_0) =  \frac{\eta}{2(1 - \gamma)(1 -\gamma + \gamma \eta / 2)}, && V^{\pi_2} (\bar{s}_0) =  \frac{\eta}{2(1 - \gamma)}
\end{align*}
Then $V^{\pi_1}(\bar{s}_0) > V^{\pi_2} (\bar{s}_0)$ means that $a_1$ is the optimal action in $\bar{s}_0$. On the other hand in the ground MDP, $a_2$ is the optimal action in $s_0$ as its value is $\eta / (1 - \gamma)$ and $a_1$ value is zero and choosing $a_1$ is $\epsilon = \eta / (1 - \gamma)$ non-optimal
\end{proof}

\fastmbie*
\begin{proof}
We appeal twice to the triangle inequality to relate the optimal value function in $M_G$ successively to the optimal value function in $M_A$ and to the policy $\tilde{\pi}_A$ produced by MBIE applied to $M_A$:
\begin{equation*}
    | V_G (s) -  V^{\tilde{\pi}_{GA}}_G (s) | \le | V_G (s) - V_A (\bar{s})  | + | V_A (\bar{s}) -  V^{\tilde{\pi}_{A}}_A (\bar{s})| +  | V_A^{\tilde{\pi}_{A}} (\bar{s}) - V^{\tilde{\pi}_{GA}}_G (s)| .
\end{equation*}
We know that the first and third terms in the inequality above are no greater than $g(\eta)$. By our choice of $\epsilon$, the middle term is also guaranteed to be of the same order.
\end{proof}

\pseudoth*
\begin{proof}
From the definition of $\hN (s)$ and $N(s)$:
\begin{equation*}
    \begin{aligned}
    \frac{\hN (s)}{N (s)} &= \frac{\rho_n (s) (1 - \rho_n^{'} (s))}{N_n (s) (\rho_n^{'} (s) - \rho_n (s))} \\
    &= \frac{\rho_n (s) (1 - \rho_n^{'} (s))}{n \mu_n (s) (\rho_n^{'} (s) - \rho_n (s))} \\
    &= \frac{\rho_n (s) (\mu_n^{'} - \mu_n (s))}{\mu_n (s) (\rho_n^{'} (s) - \rho_n (s))} \frac{(1 - \rho_n^{'} (s))}{n (\mu_n^{'} (s) - \mu_n (s))} \\
    &= \frac{\rho_n (s)}{\mu_n (s)} \frac{\mu_n^{'} - \mu_n (s)}{\rho_n^{'} (s) - \rho_n (s)} \frac{1 - \rho_n^{'} (s)}{1 - \mu_n^{'} (s)}
    \end{aligned}
\end{equation*}
Using $n (\mu_n^{'} (s) - \mu_n (s)) = 1 - \mu_n^{'} (s)$ (Lemma 1 from \citet{bellemare2016unifying}), the result follows from:
\begin{equation*}
    \frac{1 - \rho_n^{'} (s)}{1 - \mu_n^{'} (s)} = \frac{\sum_{x \neq s} \rho_{n+1} (x)}{\sum_{x \neq s} \mu_{n+1} (x)}
\end{equation*}
\end{proof}

\exploration*
\begin{proof}
When $p < 1$, the exploration bonus decreases which in turn lower the probability that agent is guaranteed to act optimally. \\
Concretely in MBIE-EB proof the bonus is crucial to show that the \emph{optimism in the face of uncertainty} behavior is verified at all timesteps. We review here shallowly how using a bonus  $\sqrt{p} \beta (N_n (s,a))^{-1/2}$ impacts this result, for an in depth review we refer to the original work of \citet{strehl2008analysis}. \\
For some state-action pairs $(s, a)$ consider the first $k \le m$ experiences of $(s,a)$ by the agent and let $X_1, ..., X_k$ be the k random variables defined by:
$ X_i \coloneqq r_i + \gamma V^{*} (s_i)$. Where $r_i$ and $s_i$ are the $i$-th reward received and next state after experiencing the pair $(s,a)$
Given $\mathbb{E} [X_i] = Q^* (s,a) $ and $0 \le X_i \le 1 / (1 - \gamma)$, the Hoeffding bound gives:
\begin{equation*} \label{eq:betap}
P \Big[\mathbb{E} [X_1] - \frac{1}{k} \sum_{i=1}^{k} X_i \ge \frac{ \sqrt{p} \beta}{\sqrt{k}} \Big] \le e^{-2 (\sqrt{p} \beta)^2 (1 -\gamma)^2} = \Big( \frac{\delta}{2 | \mathcal{S}_G | | \mathcal{A} | m} \Big)^{p}
\end{equation*}
Which using the union bound allows us to show that:
\begin{equation*} \label{eq:union}
\hat{R}(s,a) + \gamma \sum_{s'} \hat{T} (s, a, s') V^* (s') - Q^* (s,a) \ge - \frac{\sqrt{p} \beta}{\sqrt{k}}
\end{equation*}
holds for all timesteps t and all state-action pairs $(s,a)$ with probability at least $1 - (| \mathcal{S}_G | | \mathcal{A} | m) (\delta / (2 | \mathcal{S}_G  | | \mathcal{A} | m))^{p} $.
For $p < 1$, it is lower than $1 - \delta /2$ and the precision required by MBIE-EB is not achieved.

Likewise, when $p > 1$, the agent can suffer this time from \emph{over-exploration}. To prevent the bonus to modify the reward too much and influence the action gap, $\beta$ and $m$ must verify:
$$ \frac{\beta}{\sqrt{m}} \le \epsilon / 4$$
Which means that a linear increase of $\beta$ has to be compensated by a quadratic increase of $m$. \\
\end{proof}

\approxineq*
\begin{proof}

We have for $\bar{s} \in \mathcal{S}_A, $:
\begin{equation*}
    \forall s, s' \in G(\bar{s}), \,\, (1 - \epsilon)^3 \hN (s', a) \leq \hN (s, a) \leq (1 + \epsilon)^3 \hN (s', a)
\end{equation*}
Summing over all states $s'$ in the aggregation:
\begin{equation*}
    (1 - \epsilon)^3 \, \frac{\rasa}{| G(s) |} \leq \rho_n (s, a) \leq (1 + \epsilon)^3 \, \frac{\rasa}{| G(s) |}
\end{equation*}
\end{proof}
Hence:
\begin{equation*}
\begin{aligned}
f(\bs, a, \epsilon) &= \frac{(1-\epsilon)^3 \rasa (1 - (1+\epsilon)^3 \rapsa / |G(s)|)}{(1+\epsilon)^3 \rapsa - (1-\epsilon)^3 \rasa} \\
&= \frac{|G(s)| (\hna + 1) - (1 + \epsilon)^3(\hNasa + 1)}{|G(s)| (\tfrac{1}{\alpha_\epsilon^3}\hna - \hNasa + (\tfrac{1}{\alpha_\epsilon^3} - 1) \hNasa \hna}
\end{aligned}    
\end{equation*}
Using $\rasa = \hNasa / \hna$, similarly:
\begin{equation*}
\begin{aligned}
g(\bs, a, \epsilon) &= \frac{(1+\epsilon)^3 \rasa (1 - (1-\epsilon)^3 \rapsa / |G(s)|)}{(1-\epsilon)^3 \rapsa - (1+\epsilon)^3 \rasa} \\
&= \frac{|G(s)| (\hna + 1) - (1 - \epsilon)^3 (\hNasa + 1)}{|G(s)| (\alpha_\epsilon^3 \hna - \hNasa - (1 - \alpha_\epsilon^3) \hNasa \hna}
\end{aligned}    
\end{equation*}

\abstoground*

\begin{proof}
Setting $\epsilon = 0$ in the previous result gives:
\begin{equation*}
\begin{aligned}
\hNsa &= \hNasa \frac{| G(s) | (\hna + 1) - (\hNasa + 1) }{| G(s) | (\hna -  \hNasa)} \\
&= \hNasa \Big (1 + \frac{(| G(s) | -1)(\hNasa + 1) }{| G(s) | (\hna -  \hNasa)} \Big)  \\
\end{aligned}
\end{equation*}
\end{proof}

\pseudobound*
\begin{proof}

If there is exists a constant $k$ such that $0 \le \hN^{\text{A}} (\bar{s}) \le \hat{n}^{\text{A}} / k$, we can bound the term:
\begin{equation}
\begin{aligned}
\frac{(| G(s) | -1)(\hat{N}^{\text{A}} (\bar{s}) + 1) }{| G(s) | (\hat{n}^{\text{A}} -  \hat{N}^{\text{A}} (\bar{s}))} &\le \frac{(| G(s) | -1)(\hat{n}^{\text{A}} / k + 1) }{| G(s) | (\hat{n}^{\text{A}} - \hat{n}^{\text{A}} / k)} \\
&\le \frac{\hat{n}^{\text{A}} + k}{k \hat{n}^{\text{A}} - \hat{n}^{\text{A)}}} \\
&\le \frac{2}{k - 1}
\end{aligned}
\end{equation}
\end{proof}

\improvedpseudo*
\begin{proof}
We have the following system of three equations
\begin{align*}
    \rho_n (s,a) &= \frac{\tNsa}{\tn}, \\
    \rpsa &= \frac{\tNsa + 1}{\tn+ |G(s)|}, \\
    \rppsa &= \frac{\tNsa + 2}{\tn+ 2|G(s)|}.
\end{align*}
The first two give
$$
\hNsa = \rsa \frac{1 - \gs \rpsa}{\rpsa - \rsa}
$$
And from the last one
\begin{align*}
\gs &= \frac{\tNsa + 2}{\rppsa (\tn + 2)} \\
\gs &= \frac{\tNsa + 2}{\rppsa \Big( \frac{\tNsa + 1}{\gs \rpsa} + 1 \Big)} \\
\gs \rppsa + (\tNsa + 1) \frac{\rppsa}{\rpsa} &= \tNsa +2 \\
\gs \rppsa &= \tNsa + 2 - \frac{\rppsa}{\rpsa} (\tNsa +1) \\
\gs \rppsa &= \tNsa (1 - \frac{\rppsa}{\rpsa}) + (2- \frac{\rppsa}{\rpsa}) \\
\end{align*}
Then
\begin{align*}
\tNsa &= \rsa \frac{1 - \gs \rpsa}{\rpsa - \rsa} \\
\tNsa &= \rsa \frac{1 - \frac{1}{\rppsa} \Big( (\rpsa - \rpsa) \tNsa + (2 \rpsa - \rppsa) \Big)}{\rpsa - \rsa} \\
\tNsa &= \frac{\rsa}{\rppsa} \frac{\rppsa - \Big( (\rpsa - \rpsa) \tNsa + (2 \rpsa - \rppsa) \Big)}{\rpsa - \rsa} \\
\tNsa &= \frac{\rsa}{\rppsa} \frac{2 (\rppsa - \rpsa) - (\rpsa - \rpsa) \tNsa }{\rpsa - \rsa} \\
\tNsa \Big( 1 &+ \frac{\rsa (\rpsa - \rppsa}{\rppsa (\rpsa - \rsa)} \Big) = \frac{2 \rsa (\rppsa - \rpsa)}{\rppsa (\rpsa - \rsa)} \\
\tNsa &= \frac{2 \rsa (\rppsa - \rpsa)}{\rppsa (\rpsa - \rsa) - \rsa (\rppsa - \rpsa)}.
\end{align*}
By construction we have $\tNsa = \tilde{N}^{\text{A}} (s,a)$ (remember the induced abstraction is exact), besides when $\gs = 1$ which is the case in the induced abstraction we have $\tN = \hN$,  hence $\tilde{N}^{\text{A}}_n = \hat{N}^{\text{A}}_n $
\end{proof}

\end{appendices}
\end{document}